\setlist[itemize]{noitemsep, nolistsep}
\newcommand{\originalgrumbler}[2]{\begin{quote}\textcolor{blue}{\sl{\bf #1 says:} #2}\end{quote}}
\newcommand{\grumbler}[2]{\originalgrumbler{#1}{#2}}
\newcommand{\minjia}[1]{\grumbler{Minjia}{#1}}
\newcommand\later[1]{\begin{quote}\textcolor{green}{\textbackslash \textbf{later\{}} #1 \textcolor{green}{\}}\end{quote}}
\newcommand\notes[1]{\begin{quote}\textcolor{green}{\textbackslash \textbf{notes\{}} #1 \textcolor{green}{\}}\end{quote}}
\newcommand{\sfsmaller}{}
\newcommand{\bb}{BERT$_{base}$}
\newcommand{\name}{ScaLA\xspace}
\renewcommand{\notes}[1]{}
\renewcommand{\later}[1]{}
\renewcommand{\href}[2]{}
\newcommand{\outline}[1]{\grumbler{outline}{#1}}
\renewcommand{\outline}[1]{}
\setlist{nosep} 
\newcommand{\setvspace}[2]{%
  #1 = #2
  \advance #1 by -1\parskip}
\def\thm@space@setup{%
  \thm@preskip=3pt
  \thm@postskip=\thm@preskip 
}
\g@addto@macro\normalsize{%
  \setlength\abovedisplayskip{1pt}
  \setlength\belowdisplayskip{1pt}
  \setlength\abovedisplayshortskip{1pt}
  \setlength\belowdisplayshortskip{1pt}
}
\renewenvironment{proof}[1][\proofname]{\par
  \vspace{1pt}
  \pushQED{\qed}%
  \normalfont
  \topsep0pt \partopsep0pt 
  \trivlist
  \item[\hskip\labelsep
        \itshape
    #1\@addpunct{.}]\ignorespaces
}{%
  \popQED\endtrivlist\@endpefalse
  \addvspace{3pt plus 3pt} 
}
\theoremstyle{plain}
\newtheorem{theorem}{Theorem}[section]
\newtheorem{lemma}[theorem]{Lemma}
\theoremstyle{definition}
\theoremstyle{remark}
\icmltitlerunning{ScaLA: Accelerating Adaptation of Pre-Trained Transformer-Based Language Models}
\begin{document}

\twocolumn[
\icmltitle{ScaLA: Accelerating Adaptation of Pre-Trained Transformer-Based Language Models via Efficient Large-Batch Adversarial Noise}



\icmlsetsymbol{equal}{*}

\begin{icmlauthorlist}
\icmlauthor{Minjia Zhang}{comp}
\icmlauthor{Niranjan Uma Naresh}{comp}
\icmlauthor{Yuxiong He}{comp}
\end{icmlauthorlist}

\icmlaffiliation{comp}{Microsoft, Bellevue, WA, USA}

\icmlcorrespondingauthor{Minjia Zhang}{minjiaz@microsoft.com}
\icmlcorrespondingauthor{Niranjan Uma Naresh}{Niranjan.Uma@microsoft.com}
\icmlcorrespondingauthor{Yuxiong He}{yuxhe@microsoft.com}

\icmlkeywords{Machine Learning, ICML}

\vskip 0.3in
]



\printAffiliationsAndNotice{}  

\begin{abstract}
In recent years, large pre-trained Transformer-based language models have led to dramatic improvements in many natural language understanding tasks. To train these models with increasing sizes, many neural network practitioners attempt to increase the batch sizes in order to leverage multiple GPUs to improve training speed. However, increasing the batch size often makes the optimization more difficult, leading to slow convergence or poor generalization that can require orders of magnitude more training time to achieve the same model quality. In this paper, we explore the steepness of the loss landscape of large-batch optimization for adapting pre-trained Transformer-based language models to domain-specific tasks and find that it tends to be highly complex and irregular, posing challenges to generalization on downstream tasks. 

To tackle this challenge, we propose \name, a novel and efficient method to accelerate the adaptation speed of pre-trained transformer networks. Different from prior methods, we take a sequential game-theoretic approach by adding lightweight adversarial noise into large-batch optimization, which significantly improves adaptation speed while preserving model generalization. 
Experiment results show that \name attains 2.7--9.8$\times$ adaptation speedups over the baseline for GLUE on BERT$_{base}$ and RoBERTa$_{large}$, while achieving comparable and sometimes higher accuracy than the state-of-the-art large-batch optimization methods. Finally, we also address the theoretical aspect of large-batch optimization with adversarial noise and provide a theoretical convergence rate analysis for \name using techniques for analyzing non-convex saddle-point problems. 
\end{abstract}


\section{Introduction}
\label{sec:intro}
\vspace{-2mm}

There has been a large paradigm shift in AI: large-scale foundation models~\cite{foundation-models}, such as BERT~\cite{bert} and GPT-3~\cite{gpt-3}, are trained on massive open-domain data at scale and then are adapted to a wide range of domains with additional task-specific data. 
Such a paradigm has led to accuracy breakthroughs in many challenging Natural Language Processing (NLP) tasks such as the General Language Understanding Evaluation (GLUE) benchmark~\citep{glue}. Despite their remarkable performance in accuracy, given that these models often contain a few hundred million parameters (e.g., BERT) to over billions of parameters (e.g., GPT-3), enormous challenges have been raised in terms of their training efficiency. 

To accelerate the training speed of large models,
a common practice is to increase the batch size in the optimization algorithm in order to leverage multi-device training~\citep{ddp,roberta,gpipe,mesh-tensorflow,megatron-lm,zero-optimizer}. A larger batch size supports a higher data parallelism degree that allows more workers (e.g., GPUs) to participate in computing gradients locally and then aggregate. 
Furthermore, most operators used in transformer networks are highly optimized in modern linear algebra frameworks. They can scale to larger batch sizes without significantly increasing the time per step~\citep{benchmark-gpu,scaleing-law-nlp}. 

However, changing the batch size is not always straightforward, as it often makes optimization more difficult. You et al. observe that increasing batch sizes during the pre-training stage of transformer networks can easily lead to slow convergence and propose LAMB~\citep{lamb} optimizer to speed up large-batch optimization. LAMB applies layer-wise normalization before applying gradient updates, which has been used to successfully train BERT on 1024 TPU chips in 76 minutes.
Despite showing promising results, prior work primarily focuses on accelerating pre-training. However, the adaption stage still incurs non-trivial overhead (e.g., it takes tens of hours to fine-tune RoBERTa-large on MNLI~\citep{glue}) and becomes more expensive as model size increases. One natural idea is to increase the batch size during adaptation. This motivation seems well-aligned with existing works on large-batch optimization. However, our analysis indicates that the loss landscape of large-batch optimization for adapting pre-trained Transformer-based language models to domain-specific tasks is highly complex and irregular, posing challenges to generalization on downstream tasks. 

To address this challenge, we make the following contributions: (1) We present a novel algorithm \name 
that injects lightweight adversarial noise into large batch optimization to speed up the adaptation of pre-trained transformer networks. To the best of our knowledge, this is the first effort to accelerate the adaptation with large-batch adversarial noise.
(2) We conduct extensive evaluation, and our results show that \name accelerates the adaptation of pre-trained Transformer-networks by 2.7--9.8 times over the baseline on BERT~\citep{bert} and RoBERTa~\citep{roberta} over a wide range of natural language understanding (NLU) tasks. We also perform detailed ablation studies to assess the impact of our approach on both generalizability and speed. 
(3) We present a theoretical convergence rate analysis using techniques for analyzing non-convex saddle-point problems.

\vspace{-2mm}
\section{Background and Related Work}
\label{sec:background}
\vspace{-1mm}

Despite the great success of pre-trained transformer networks such as BERT~\citep{bert}, a big challenge, in general, comes from the training efficiency -- even with self-attention and parallelizable recurrence~\citep{transformer}, and high-performance hardware~\citep{tpu}, training transformer networks can still take a significant amount of time. One effective approach to reducing training time is through data parallelism~\citep{bert,roberta,megatron-lm}, which motivates studies on large-batch stochastic non-convex optimizations for transformer networks~\citep{lamb}. These studies have raised concerns with respect to its convergence, generalizability, and training stability by observing that training with a large batch could be difficult~\citep{on-large-batch-training,train-longer,large-batch-reality-check}. Different from prior works, which mostly focus on reducing the pre-training time~\citep{lamb,pld,progressive-stacking,electra}, this work shows an effective approach to accelerate the adaptation of pre-trained models while preserving the accuracy of downstream tasks.

Efficient adaptation of pre-trained Transformer models has also been studied by several recent works including~\citep{adaptor,k-adapter,lora,unified-parameter-efficient}. For example, \citep{adaptor} inserts small modules called adapters to each layer of the pre-trained model, and only the adapters are trained during adaptation. \citep{lora} adds low-rank matrices to approximate parameter updates. \citep{adaptor-fusion} shows that it is possible to quickly adapt to new tasks without catastrophic forgetting. These methods have reported achieving comparable performance to standard fine-tuning on different sets of tasks. However, their focus is on reducing the trainable parameters per task such that each task does not need to keep a separate copy of fine-tuned model parameters. Unlike these methods, which still incur full forward/backward computation cost during adaptation, we investigate how to accelerate the adaptation speed through large-batch optimization and adversarial noise.

On a separate line of research, adversarial training was first proposed in the computer vision literature to improve a model's robustness against adversarial attacks~\citep{fgsm,pgd-k}. Recently, there has been some work that shows that adversarial training helps improve model generalizability~\citep{nmt-double-adv-inputs,improve-nlm-via-adv,smart,alum,hessian-based-large-batch,freelb,concurrent-adv,sam,vit-outperform-resnet}. However, very few works examine how adversarial learning helps improve the adaptation speed of pre-trained Transformer models. The work most similar to ours is~\citet{freelb}, which studies reducing the cost of adversarial training by accumulating the gradient of the parameters from each of the ascent steps and updating the parameters only once after $K$ inner ascent steps with the accumulated gradients. Unlike \citet{freelb}, we investigate the interplay between large-batch optimization and adversarial noise for speeding up the adaptation of pre-trained Transformer models.

\vspace{-2mm}
\section{Challenges and Motivation}
\label{sec:analysis}
\vspace{-1mm}

In this section, we present several studies that reveal the key challenges involved in accelerating the adaptation of pre-trained Transformer networks using pre-trained BERT$_{base}$ model on GLUE as an example. These insights further motivate the design of the large-batch optimization approach detailed in Section~\ref{sec:design}. The detailed hardware/software setup is described in Section~\ref{sec:eval}. 

\textbf{Scalability analysis.}
First, we carry out a scalability test by varying the number of GPU workers from 1 to 32, with and without communication. Different from pre-training, the adaptation stage often employs a much smaller batch size (e.g., 16, 32) than pre-training (e.g., 4096)~\citep{bert,roberta}.We choose a batch size 32, as suggested by most literature for BERT fine-tuning~\cite{bert,roberta}, and we divide the samples in the mini-batch among $P$=$\{$1,2,4,8,16,32$\}$ GPUs. If the per-worker batch size (e.g., 16) is larger than the maximum admissible per-worker batch size (e.g., 8), we use local gradient accumulation~\cite{train-imagenet-in-1hour} to avoid running out of memory. Figure~\ref{fig:scalability_mnli} shows the scalability results.
For batch size 32, the training time decreases when P increases from 1 to 4. However, it quickly plateaus and even decreases with more GPUs. We find that this is because when the batch size is small, the communication overhead dominates the total execution time (e.g.,{B=32} vs. {B=32 (no comm)}). The communication overhead is huge, especially when there is cross-machine communication (e.g., from 16 to 32),  hindering the scalability of multi-GPU training. In contrast, by increasing the batch size (e.g., to 1K), the training time keeps decreasing as the number of GPUs increases because an increased batch size reduces the number of all-reduce communications to process the same amount of data and also increases the compute resource utilization per GPU (i.e., increased computation-vs-communication ratio).

\begin{figure*}[!ht]
  \begin{minipage}[c]{0.25\textwidth}
  \centering
    \subfigure[Scalability]{\includegraphics[scale=0.30, keepaspectratio=true]{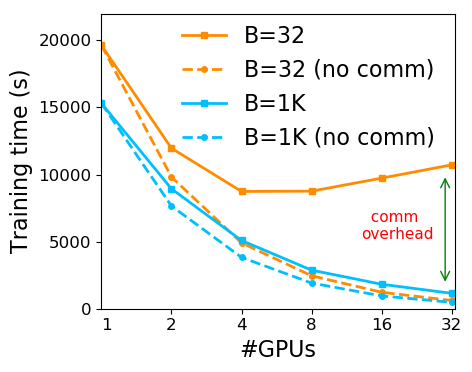}\label{fig:scalability_mnli}}
 \end{minipage}%
  \begin{minipage}[c]{0.570\textwidth}
  \centering
    \subfigure[Generalizability]{\includegraphics[scale=0.34, keepaspectratio=true]{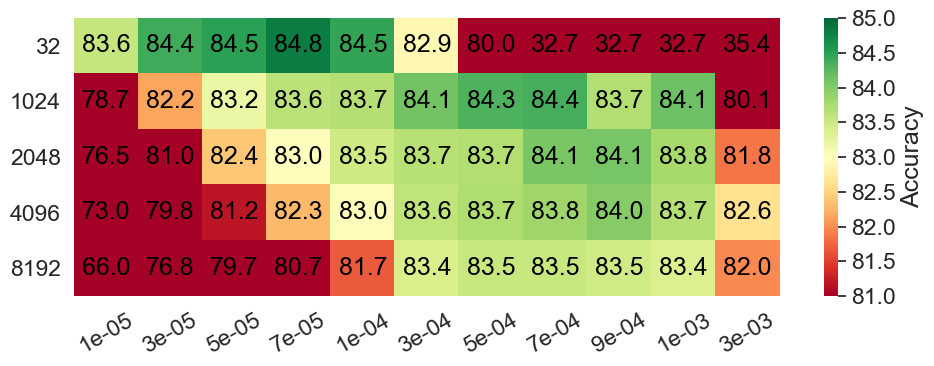}\label{fig:generalizability_mnli}}
 \end{minipage}%
  \begin{minipage}[c]{0.15\textwidth}
  \centering
    \subfigure[Sharpness]{\includegraphics[scale=0.3, keepaspectratio=true]{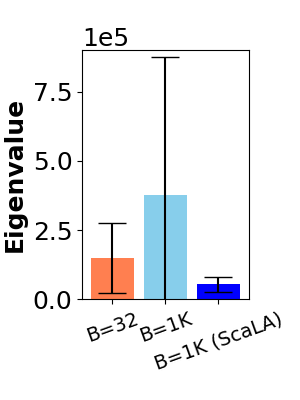}\label{fig:eigenvalue}}
 \end{minipage}%
 \caption{Scalability, generalizability, and curvature analysis results by adapting BERT$_{base}$ to the MNLI task.}
 \label{fig:heatmap-varying-batch}
\end{figure*}

\textbf{Generalizability analysis.}
Increasing the batch size leads to accelerated per-epoch execution time due to the efficient utilization of hardware. However, how would increasing the batch size affect the generalizability in adapting transformer networks? 
Since prior works on batch size scaling often focus on computer vision tasks and pre-training~\cite{do-not-decay-lr,train-imagenet-in-1hour,disciplined-hyperparameters,lamb}, we conduct an analysis of large-batch adaptation on pre-trained Transformers by performing a hyperparameter sweep on batch sizes $\{$1K, 2K, 4K, 8K$\}$ and learning rates $\{$1e-4, 3e-4,5e-4, 7e-4, 9e-4, 1e-3, 3e-3$\}$, where the learning rate range covers both linear scaling~\citep{train-imagenet-in-1hour} and sqrt scaling~\citep{lamb}. 
We report the validation accuracy in Figure~\ref{fig:generalizability_mnli}.
We make two observations: (1) the learning rate scales roughly with the square root of the increase of the mini-batch size, although the best learning rates do not always follow the sqrt rule; (2) there is a generalization gap between the small batch and large batch accuracies, and the gap becomes larger when the batch size increases. Furthermore, methods, such as LAMB~\cite{lamb}, works well on pre-training with extremely large batch sizes ($\log_2 B = \{15, 16\}$) but do not close the generalization gap in adaptation (as shown in Section~\ref{sec:eval}). These results pose the question: can we increase the batch size during adaptation in the interest of making adaptation more efficient but preserving generalization?

\textbf{Curvature analysis.} To further examine the generalization gap, we resort to the curvature analysis.
Prior work~\cite{on-large-batch-training,limit-of-large-batch} correlate the low generalization with sharp minima (which are characterized by a positive curvature of large magnitude in the parameter space). The indication is that a sharp local minimum also reflects a higher sensitivity of the loss even within the neighborhood of training data points and can attribute to the difficulty in generalization. 
Their hypothesis was that a larger noise due to the higher variance in gradient estimates computed using small mini-batches, in contrast to gradient estimates computed using large mini-batches, encourages the parameter weights to exit out of the basin of sharp minima and towards flatter minima which have better generalization.

To verify this hypothesis, we quantitatively measure the steepness of loss landscape by loading the checkpoint of an adapted model and computing the curvature, i.e., properties of the second derivative of the model, with respect to its parameters, for a fixed batch of samples. 
Following \cite{hessian-based-large-batch}, for a model $\Phi(x)$, we compute the largest eigenvalue of the model's Hessian, $L_{\max} [\nabla_x^2 \Phi (x)]$, 
using the Hessian-vector product primitive and the power method. We use the largest eigenvalue as a measure of sharpness since the corresponding (top) eigenvector characterizes the direction of the largest change in gradient at a given point in the parameter space. From Figure~\ref{fig:eigenvalue}, the largest eigenvalue of the model trained with a large batch (e.g., 1K) is much larger (e.g., 2.6x) than the small-batch baseline and with higher deviations (e.g., 3.9x). This result confirms that large-batch adaptation makes the loss landscape of the model more prone to ill-conditioning and less robust to perturbation, which helps explain the loss in generalization.

\vspace{-2mm}
\section{The Proposed Method}
\label{sec:design}
\vspace{-1mm}

Motivated by the challenges in accelerating the adaptation, in this section, we present a principled large-batch optimization method via lightweight adversarial noise for improved adaptation speed while maintaining the quality of the solutions as measured by task-appropriate accuracy metrics. 

\vspace{-2mm}
\subsection{A Sequential Game-theoretic Method via Adversarial Noise}
\label{sec:formulation}
\vspace{-1mm}

Let $\mathbb{X}$ denote the parameter space and $\mathbb{Y}$ denote the data (mini-batch/sample) space and $Q$ denote a distribution supported on $\mathbb{Y}$. To improve the adaptation speed of pre-trained transformer models while retaining generalizability, we augment the usual stochastic optimization objective by constructing an adversarial~\citep{on-large-batch-training,pgd-k} regularization. In particular, we solve the following optimization problem, which is a stochastic minimax~\citep{lin2020gradient} optimization problem augmented with a regularization term involving a deterministic adversarial noise, instead of vanilla risk minimization: 
\begin{align}
    \min_{x \in \mathbb{X}} \mathbb{E}_{\xi \sim Q}[g(x, \xi)] 
    & = \min_{x \in \mathbb{X}} \mathbb{E}_{\xi \sim Q}[\underline{f}(x, \xi) + \lambda \underline{r}(x)] \nonumber \\
    & = \min_{x \in \mathbb{X}} \max_{ y \in \mathbb{Y} } \mathbb{E}_{\xi \sim Q}[\underline{f}(x, \xi) + \lambda r(x, y)] \nonumber \\
    & := \min_{x \in \mathbb{X}} \max_{ y \in \mathbb{Y} } \mathbb{E}_{\xi \sim Q}[f(x, y, \xi)] \label{eqn:formulation}
\end{align}

where $g: \mathbb{X} \times \mathbb{Y} \to \mathbb{R}$ denotes the overall training objective, $\underline{f}: \mathbb{X} \times \mathbb{Y} \to \mathbb{R}$ denotes the standard training objective, $f:\mathbb{X} \times \mathbb{Y} \times \mathbb{Y} \to \mathbb{R}$ denotes the augmented objective, $\underline{r}: \mathbb{X} \to \mathbb{R}$ denotes a deterministic regularization term on the parameters controlled by a strength factor $\lambda \in (0, \infty)$, $r: \mathbb{X} \to \mathbb{R}$ denotes the augmented regularization and $\xi$ denotes samples drawn from $Q$ (for simplicity, we slightly abuse the notation in using $\xi$ to denote the random variable, e.g. $\mathbb{E}_\xi[g(x,\xi)]$, or its empirical realizations, e.g. $\frac{1}{K} \sum_{k=1}^{K} g(x,\xi_k)$ for any $K$).
The overall (outer) training objective involves a minimization problem in the parameter space while being stochastic with respect to the data space. The adversarial regularization (inner) term is a deterministic maximization problem operating in the data space conditioned on a fixed parameter configuration. We emphasize that this formulation is a two-player sequential~\citep{jin2020local}, not simultaneous, game wherein the goal is to optimize a transformer network that is insensitive to adversarial noise. In a given round, the first player (associated with the outer minimization) proposes a parameter configuration, and the second player (associated with the inner maximization) responds with a penalty to capture the effect of label errors due to noises in a large data batch size to undermine the performance of the transformer parameter configuration chosen by the first player.

Language expressions are quite sensitive to individual words or clauses, where noises against those would likely generate incorrect or biased training data with wrong labels~\citep{word-embedding-perturbation}.
Following prior success in applying adversarial training to NLP models~\citep{adv-training-for-semi-supervised,freelb}, we apply noises to the continuous word embeddings instead of directly to discrete words or tokens. The term $\underline{r}$ captures the prediction deviation from the noise. In a given round of the game, with respect to the first player's proposal, let
$\Phi$ 
denote the transformer network under consideration and $\xi$ be a large batch of data sampled from $Q$. We construct a label for the second player as $\gamma:=\Phi(x, \xi)$. Next, for classification tasks, we choose $r$ to be the symmetric KL divergence~\citep{smart}, i.e., $r(x,y):= \text{KL$_{\text{sym}}$}(\gamma, \Phi(x, y))$. We use symmetric KL divergence to measure the distributional divergence to generate adversarial noise. 
For regression tasks, we choose $r$ to be the squared loss, i.e., $r(x,y):= (\gamma - \Phi(x, y))^2$. In practice, we add an $\ell_\infty$ constraint on $y$, which is achieved by simple clipping with a radius of $\omega$ (projection). Intuitively, a large $r$ corresponds to a situation wherein the transformer is highly sensitive to a given noise in the input, suggesting that the model parameters are close to a sharp minimum. Augmenting the original training objective with $r$ makes the first player incur an additional penalty if the outer minimization solution veers closer to sharp minima, thereby encouraging flatter solutions and better generalizability.

\begin{figure*}[t]
  \begin{minipage}[c]{0.36\textwidth}
    \includegraphics[scale=0.7,keepaspectratio=true]{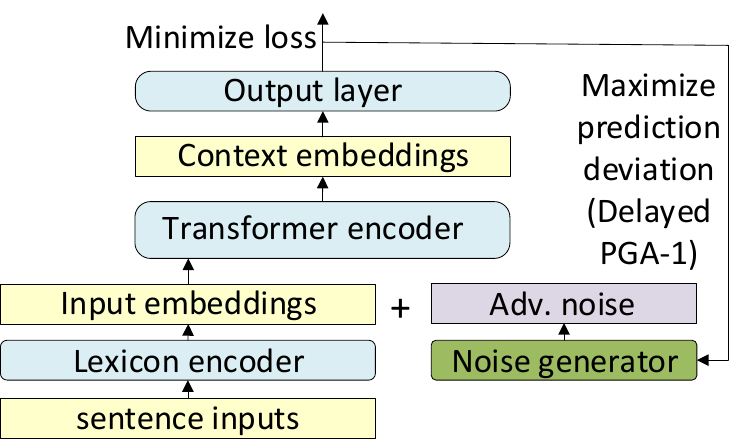}
    \caption{The architecture of the proposed method.}
    \label{tbl:training-time-comparison}
  \end{minipage}
  \hfill
    \begin{minipage}[c]{0.3\textwidth}
    {\includegraphics[scale=0.33, keepaspectratio=true]{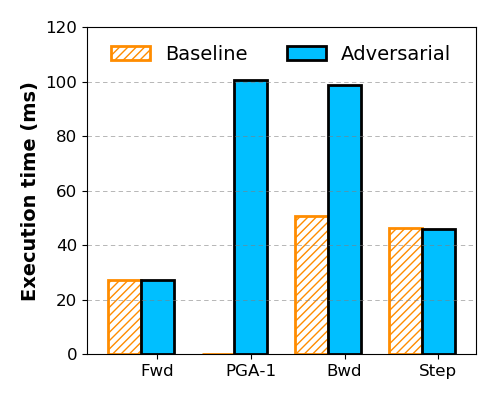}
    \caption{Time breakdown without and with PGA-1.}
    \label{fig:adv_training_cost_breakdown}}
  \end{minipage}
    \hfill
  \begin{minipage}[c]{0.31\textwidth}
    {\includegraphics[scale=0.35, keepaspectratio=true]{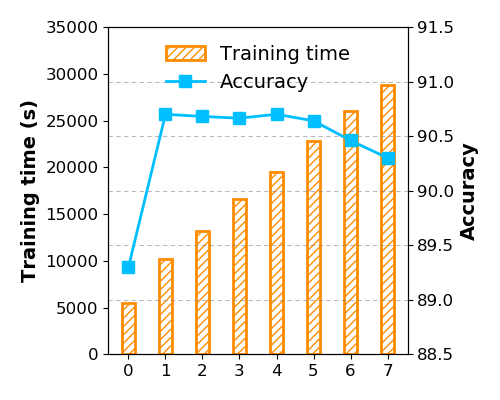}
    \caption{Impact of perturbation steps.}
    \label{fig:perturbation-steps}}
  \end{minipage}
\end{figure*}

For any given outer step $t$, let $x_t$ denote the parameter proposed by the first player. Since the exact inner maximization in Equation~\eqref{eqn:formulation} is intractable for non-convex models, we adopt truncated methods as in prior works. Specifically, we use Projected Gradient Ascent (PGA)~\citep{pgd-k,smart} to solve this problem, i.e., $y_{\tau+1}=\Pi_\omega(y_\tau + \rho_\tau \nabla_{y}r(x_t,y))$
where $\rho_\tau$ for $\tau \in [\mathcal{T}]$ is the step size sequence and $\Pi$ projects the result of the gradient ascent update into an $\ell_\infty$ ball of diameter $2 \omega$ around the original input embeddings, $\xi$, considered by the first player.

\textbf{Practical considerations. }
Inspired by LAMB~\citep{lamb}, which employs group-wise adaptive learning rates to improve the adaptivity and convergence of large-batch optimizations during pre-training, we solve the outer minimization problem in Equation~\eqref{eqn:formulation} with
$x^i_{t+1} = x^i_{t} - \eta_t \nu(\| x^i_{t}\|) \widehat{\nabla}^i_x g(x) / \|\widehat{\nabla}^i_{x} g(x)\|$,  $\forall i \in [h]$
where $i$ denotes the $i^{\text{th}}$-layer of the pre-trained transformer.
The learning rate sequence $\eta_t$, $\forall t \in [T]$ is scaled by a clipping function $\nu(c) := \max(\mathcal{L}, \min(c, \mathcal{U}))$ where $\mathcal{L} < \mathcal{U}$ (e.g., $\mathcal{L}=0$ and $\mathcal{U}=10$), which ensures the norm of the update is of the same order as that of the weights. We show that \name is beneficial to accelerate the adaptation under the large-batch regime with and without group-wise adaptive learning rates, but they can be combined together to deliver better results.  




\subsection{Reducing Adversarial Noise Overhead}
\label{subsec:reduce-adv-overhead}

Our experiments find that although injecting adversarial noise into large-batch optimization helps improve the generalizability; it may not reduce the adaptation time because the generation of adversarial noises can take a large fraction of time. This section first provides an analysis of the computational cost and then describes two approaches to reduce the time spent in generating adversarial noise, thereby further reducing the overall adaptation time.

The generation of adversarial noise requires an extra PGA inner loop that standard training does not have. Figure~\ref{fig:adv_training_cost_breakdown} provides the time breakdown of optimization using PGA with $\mathcal{T}=1$ (denoted as PGA-1). PGA-1 performs the perturbation and takes approximately the same time as making three forward passes (Fwd) through the network. This is because one step of PGA requires to make one forward and backward pass (Bwd) over the entire network. The backward pass of the optimization takes roughly twice the amount of time as the standard backward step because the back-propagation is triggered twice to calculate the noise and the gradients. The time spent on the optimizer step function remains the same.
In total, the optimization would slow down training by at least 2 times, even with $\mathcal{T}$=1. 
This motivates us to look at the effectiveness of different perturbation steps as well as the usefulness of perturbation from the initial epochs.

\textbf{Impact of perturbation steps. }
Prior works often do multiple gradient computation steps ($\mathcal{T}>1$) and take several times longer training time to produce adversaries~\citep{pgd-k,freelb}, likely because their focus is on generalization instead of computational efficiency. Subsequently, researchers presented Curriculum Adversarial Training (CAT)~\citep{curriculum-at} and Annealing-based Adversarial Training~\citep{amata}, which progressively increase the perturbation with various strengths, cutting the adversarial training cost while maintaining good accuracy. To investigate how CAT and similar methods affect 
large-scale NLP problems involving transformers, we 
evaluate the final accuracy and training cost of QNLI, varying the number of perturbation steps $\mathcal{T}$ and report the results in Figure~\ref{fig:perturbation-steps}. Interestingly, although using a large $\mathcal{T}$ helps to produce stronger noises, we find that this does not lead to improved accuracy, despite the fact that the training overhead still increases almost linearly. In fact, the best accuracy is achieved with $\mathcal{T}=1$. 


We note that the model has two components, namely, the parameter space and data space. First, unlike the minimization in the parameter space, which is stochastic, the maximization in the data space is deterministic. Second, with respect to the testing phase, the numerical convergence in the model's parameter space is of primary importance rather than the numerical convergence in the data space, i.e., the maximization is an auxiliary procedure that augments the training phase to make the parameter space "aware" of effects of the batch size across epochs. Due to these two points, at a certain epoch, for a given batch, the marginal utility of an additional PGA step is low, and we are able to get away with inexact deterministic maximization. Therefore, we apply PGA-1 in our large-batch optimization scheme, given that it produces sufficiently good solutions while being much more computationally efficient. 

\textbf{Delayed noise injection.} Given that PGA-1 still adds an overhead factor of 2, we are motivated to further reduce the overhead of adversarial noise. In particular, we investigate how useful adversarial noises are in the whole large-batch optimization process. 
We conduct additional experiments to measure the final accuracy corresponding to starting
from a regular fine-tuning and then enabling PGA-1 for $t \geq t_s$ where $t_s \in [T]$. Our observation is that enabling PGA-1 from the beginning does not offer much improvement in accuracy, whereas adversarial noise becomes more potent as the model begins to stabilize towards the end of training (more detailed results in Appendix~\ref{subsec:perturbation-usefulness}). In general, at initialization, the model's parameters are relatively far from their final values and are less likely to get stuck at local minima. Therefore the adversarial noises generated in the initial training iterations are quite different from the noises towards the end of training because they would not maximize the adversarial loss in Equation~\ref{eqn:formulation}. This hypothesis suggests that we might be able to inject adversarial noise in the later training process while still leveraging it to improve generalizability. We remark that this phenomenon has been observed by prior work on computer vision tasks~\citep{curriculum-at,delayed-adv}.

\textbf{Putting it together.} Combining the formulation with the above investigations, the full procedure of \name is provided in Algorithm~\ref{algo:algorithm}, whose convergence rate is characterized in Theorem~\ref{thm:oracle}.

\begin{algorithm}[ht]
\caption{\hfill \textbf{\name}}
\label{algo:algorithm}
  \begin{algorithmic}[1]
    \STATE \textbf{Input:} Epochs $T$, delay $t_s$, perturbation (inner) step size $\rho$, clipping radius $\omega$, regularization strength $\lambda$, (outer) learning rate $\eta$
    \STATE \textbf{Output}: $h$-layer transformer model $\Phi$ with converged robust parameters $\overline{x}:=x_T$
    \FOR{$t \in [T]$} 
        \FOR{worker $p \in [P]$} 
            \FOR {mini-batch $\xi_p \sim Q$} 
                %
                \STATE {$\underline{r}(x_{t}) \leftarrow 0$, $\gamma \leftarrow \Phi(x, \xi_p)$, select $y_0$}
                \IF {\colorbox{orange}{$t \geq t_s$}}
                    \STATE \COMMENT{Check delay condition}
                    \STATE \colorbox{lightgray}{$y_{1} \leftarrow \Pi_\omega(y_0 + \rho \nabla_{y}r(x_t,y))$} \COMMENT{Generate adversarial noise with PGA-1} 
                    \STATE {$\underline{r}(x_{t}) \leftarrow \colorbox{yellow}{\text{KL$_{\text{sym}}$}}(\gamma, \Phi(x_{t-1}, y_1))$}
                \ENDIF
                \STATE \colorbox{lime}{$g(x_t, \xi_p) \leftarrow \underline{f}(x_{t-1}, \xi_p) + \lambda \underline{r}(x_{t})$} 
                \STATE $\nabla_x g(x_t, \xi_p) \leftarrow$ Backward pass on $\Phi$ 
            \ENDFOR
        \ENDFOR
        \STATE $\widehat{\nabla}_x g(x_t) \leftarrow \frac{1}{B}\sum_{p=1}^P \nabla_x g(x_t, \xi_p)$ 
        \STATE \colorbox{pink}{$x^i_{t} \leftarrow x^i_{t-1} - \eta_t \nu(\| x^i_{t}\|) \frac{\widehat{\nabla}^i_x g(x_t)}{\|\widehat{\nabla}^i_{x} g(x_t)\|}$,  $\forall i \in [h]$} 
    \ENDFOR
  \end{algorithmic}
\end{algorithm}

\begin{theorem}[Complexity of Algorithm~\ref{algo:algorithm}; Informal -- Details in Appendix~\ref{sec:proof}]
\label{thm:oracle}
Consider the problem in Equation~\ref{eqn:formulation}. Let $t_s =0$. Setting the outer learning rate as $\eta = O \left( 1 / \sqrt{T} \right)$ and scaling batch size as $b = O(T)$, for Algorithm~\ref{algo:algorithm}, we have
$
    \mathbb{E} \left[ \| \nabla g_{1/2 \alpha}(\overline{x}) \|^2 \right] \leq O \left( \epsilon  + \kappa_\alpha / \sqrt{T} \right)
$
where $\overline{x}$ is the estimator obtained from running $T$ steps of Algorithm~\ref{algo:algorithm} and picking $x_t$ uniformly at random for $t \in [T]$. Here, $\epsilon$ is the error due to the approximate inner maximization oracle, $\alpha$ characterizes the smoothness of $f(x,.)$, $g_{1/2\alpha}$ is the Moreau-envelope of $g$ and $\kappa_\alpha = \max_i \alpha_i / \min_i \alpha_i$.
\end{theorem}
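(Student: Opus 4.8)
The plan is to combine the Moreau-envelope framework for weakly convex minimization with a large-batch analysis of the layer-wise normalized (LAMB-style) update, while separately accounting for the error incurred by the single-step inner maximization. Since the theorem fixes $t_s = 0$, adversarial noise is injected in every iteration, so the augmented objective $g$ is the consistent target throughout and I may treat the homogeneous case.

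First I would establish that the primal objective $g(x) = \underline{f}(x) + \lambda \max_{y} \mathbb{E}_\xi[r(x,y)]$ is $\alpha$-weakly convex. Under the standing assumption that $\underline{f}(\cdot,\xi)$ and each $r(\cdot,y)$ have Lipschitz gradient with per-layer constants $\alpha_i$, the pointwise maximum over $y$ preserves weak convexity, so $g$ is weakly convex with modulus controlled by $\alpha = \max_i \alpha_i$. This guarantees that the Moreau envelope $g_{1/2\alpha}$ is well defined and continuously differentiable, and that $\|\nabla g_{1/2\alpha}(x)\|$ is a legitimate near-stationarity measure: a small value certifies that $x$ lies close to a point that is nearly stationary for $g$. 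I would also invoke a Danskin-type characterization to write the exact search direction as $\nabla_x f(x, y^*(x))$, where $y^*(x)$ is the true inner maximizer. Next I would quantify the inner-maximization error: since PGA-1 ($\mathcal{T}=1$) returns an approximate maximizer $\hat y$ rather than $y^*(x)$, the algorithm uses $\nabla_x f(x,\hat y)$ in place of $\nabla_x f(x,y^*(x))$. Using smoothness of $f$ in $y$ together with the projected-ascent guarantee, the resulting directional bias is bounded by a constant multiple of the inner suboptimality, which is collected into $\epsilon$; this is the source of the irreducible $O(\epsilon)$ floor.

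The heart of the argument is a per-iteration descent inequality on the envelope. Writing $\hat x_t := \arg\min_w \{ g(w) + \alpha \| w - x_t \|^2 \}$ for the proximal point defining $g_{1/2\alpha}$, I would bound $g_{1/2\alpha}(x_{t+1}) \le g(\hat x_t) + \alpha \| x_{t+1} - \hat x_t \|^2$ and expand the squared distance using the layer-wise update $x^i_{t+1} = x^i_t - \eta_t\, \nu(\|x^i_t\|)\, \widehat{\nabla}^i_x g / \|\widehat{\nabla}^i_x g\|$. The difficulty specific to \name is that each layer's step direction is the \emph{normalized} stochastic gradient rather than the gradient itself, so the usual cross term $\langle \nabla g, \text{step} \rangle$ does not collapse to $\|\nabla g\|^2$. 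To handle this I would follow the large-batch LAMB-style route: with batch size $b$, the stochastic gradient concentrates around its mean with variance $O(1/b)$, which lets me control the \emph{bias} of the normalization operation, i.e. the gap between $\mathbb{E}[\widehat{\nabla}/\|\widehat{\nabla}\|]$ and the normalized true gradient. The mismatch between the heterogeneous per-layer smoothness constants $\alpha_i$ and the single uniform normalization is precisely what produces the conditioning factor $\kappa_\alpha = \max_i \alpha_i / \min_i \alpha_i$.

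I expect the main obstacle to be exactly this normalization step: the map $g \mapsto g/\|g\|$ is nonlinear and generically biased for stochastic $g$, so a naive descent estimate fails to close. Controlling this bias is what forces the schedule $b = O(T)$, so that the accumulated normalization error over $T$ steps stays at the $O(1/\sqrt{T})$ level. Finally I would telescope the descent inequality over $t = 1,\dots,T$, take total expectation over the sampling, and use the uniform random choice of $\overline{x}$ among $\{x_t\}$ to convert the running average into $\mathbb{E}[\|\nabla g_{1/2\alpha}(\overline{x})\|^2]$; balancing $\eta = O(1/\sqrt{T})$ against the residual variance and normalization-bias terms then yields the claimed $O(\epsilon + \kappa_\alpha/\sqrt{T})$ rate.
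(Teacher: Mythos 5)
Your proposal follows essentially the same route as the paper's proof in Appendix~\ref{sec:proof}: weak convexity of $g$ via Lemma~\ref{lem:weak_cvx}, a per-step descent inequality on the Moreau envelope through the proximal point $\widehat{x}_t$, control of the normalized-stochastic-gradient bias via the $O(1/\sqrt{b})$ variance with $b=O(T)$, and telescoping with $\eta=O(1/\sqrt{T})$. The only minor deviations are mechanical: the paper injects $\epsilon$ at the function-value level (using $f(x_t,\widehat{y})\ge g(x_t)-\epsilon$ inside the smoothness bound) rather than via a Danskin-type gradient-bias argument, and $\kappa_\alpha$ arises there from bounding $\zeta=\max_{i,t}\alpha_i/\|\nabla^i_t\|$ with the gradient bound $\mathcal{G}$ rather than directly from the normalization/smoothness mismatch.
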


\vspace{-2mm}
\section{Evaluation}
\label{sec:eval}
\vspace{-2mm}

We evaluate the effectiveness of \name in adapting pre-trained transformer networks over a set of NLP tasks. 

\textbf{Hardware.} We conduct the evaluation using 2 NVIDIA DGX-2 nodes. Each node consists of 16 NVIDIA V100 GPUs. The nodes are connected with InfiniBand using a 648-port Mellanox MLNX-OS CS7500 switch. 
\textbf{Model/Dataset.} We study adaptation on pre-trained \bb model and RoBERTa$_{large}$ hosted by HuggingFace~\citep{hf-transformers}.
We use the GLUE benchmark~\citep{glue}, which is a collection of sentence or sentence-pair natural language understanding tasks including question answering, sentiment analysis, and textual entailment. 
We exclude tasks that have very small datasets (e.g.,CoLA, RTE). We report the details about the hyperparameters in Appendix~\ref{sec:hyperparameters}.

\vspace{-2mm}
\subsection{Main Results -- Adaptation Time Acceleration and Generalizability Improvement} 
\vspace{-2mm}

We first compare the following schemes: (1) \textbf{Single GPU + SB: } This is the existing PyTorch implementation of Transformer fine-tuning from HuggingFace (HF), using small batch (SB) sizes (e.g., 32). (2) \textbf{Multi-GPU + SB: } This is multi-GPU PyTorch implementation using DistributedDataParallel~\citep{ddp}, and (3) \textbf{Multi-GPU + LB + \name:} This is our approach as described in Algorithm~\ref{algo:algorithm}, using large minibatches (LB), e.g., 1K, for adaptation. 
Table~\ref{tbl:downstream-accuracy} shows results on MNLI, QNLI, QQP, and SST2, which are larger datasets and less sensitive to random seeds. $n \times g$ refers to $P_n$ nodes each with $P_g$ GPUs for a total of $P = P_n P_g$ homogeneous workers (e.g., 32 GPUs on 2 NVIDIA DGX-2 nodes). For a fair comparison, we reproduce BERT and RoBERTa baseline. Our reproduced baseline achieves the same or slightly higher accuracy than the originally reported results in \citep{bert} and \citep{roberta}. We now discuss our results and observations.

\begin{table*}[!ht]
	\newcommand{\smallcolspc}{\hspace*{0.3em}}
	\newcommand{\colspc}{\hspace*{0.28em}}
    \small
    \renewcommand\sfsmaller{}
    \centering
    \caption{The adaptation time and accuracy results on GLUE benchmark. \name achieves the same average accuracy as the baseline while providing up to 18$\times$ speedups than single GPU, and up to 9.8$\times$ speedups with the same amount of hardware.}
\begin{tabular}{|@{\smallcolspc}c@{\smallcolspc}|@{\smallcolspc}c@{\smallcolspc}|@{\smallcolspc}c@{\smallcolspc}|@{\smallcolspc}c@{\smallcolspc}|@{\smallcolspc}c@{\smallcolspc}|@{\smallcolspc}c@{\smallcolspc}|@{\smallcolspc}c@{\smallcolspc}|@{\smallcolspc}c@{\smallcolspc}|@{\smallcolspc}c@{\smallcolspc}|@{\smallcolspc}c@{\smallcolspc}|@{\smallcolspc}c@{\smallcolspc}|@{\smallcolspc}c@{\smallcolspc}|@{\smallcolspc}c@{\smallcolspc}|@{\smallcolspc}c@{\smallcolspc}|@{\smallcolspc}c@{\smallcolspc}|@{\smallcolspc}c@{\smallcolspc}|}
\hline
\multirow{2}{*}{BERT\textsubscript{base}}         & \multirow{2}{*}{n$\times$g} & \multirow{2}{*}{bsz} & \multicolumn{3}{c|}{\textbf{MNLI-m}} & \multicolumn{3}{c|}{\textbf{QNLI}} & \multicolumn{3}{c|}{\textbf{QQP}} & \multicolumn{3}{c|}{\textbf{SST-2}} & \multirow{2}{*}{Avg.} \\ \cline{4-15} 
                  &                      &                           & Steps     & Time     & Acc.      & Steps     & Time     & Acc.    & Steps   & Time   & Acc/F1     & Steps     & Time     & Acc.   &  \\ \hline
Devlin et al. 2019 &                       &                              &       &                                           & 84.4          &               &                               & 88.4                  &          &                          & \--                              &                &                               & 92.7      &    \--        \\ \hline
Baseline (B=32)                    & 1x1                   & 32                           & 73632 & 19635                                     & 84.8          & 19644         & 5535                          & \textbf{90.6}         & 68226    & 16494                    & \multicolumn{1}{l|}{\textbf{91/88.0}}   & 12630          & 2736                          & 93.1        & \textbf{89.4}          \\ \hline
Baseline (B=32)                               & 2x16                  & 32                           & 73632 & 8848                                      & 84.8          & 19644         & 2408                          & 90.6                  & 68226    & 11311                    & \multicolumn{1}{l|}{91/88.0}   & 12630          & 1494                          & 93.1    & \textbf{89.4}              \\ \hline
\name (B=1K)                         & 2x16                  & 1K                           & 2301  & \textbf{1323}                             & \textbf{85.1} & 615           & \textbf{432}                  & 90.0                    & 2133     & \textbf{4229}          & 90.9/87.7                      & 396           & \textbf{151}                  & \textbf{93.5}     & \textbf{89.4}    \\ \hline

\hline
\multirow{2}{*}{RoBERTa\textsubscript{large}}         & \multirow{2}{*}{n$\times$g} & \multirow{2}{*}{bsz} & \multicolumn{3}{c|}{\textbf{MNLI-m}} & \multicolumn{3}{c|}{\textbf{QNLI}} & \multicolumn{3}{c|}{\textbf{QQP}} & \multicolumn{3}{c|}{\textbf{SST-2}} & \multirow{2}{*}{Avg.} \\ \cline{4-15} 
                  &                      &                             & Steps     & Time     & Acc.      & Steps     & Time     & Acc.    & Steps   & Time   & Acc/F1     & Steps     & Time     & Acc.   &  \\ \hline
Liu et al. 2020 &                       &                              &       &                                           & 90.2          &               &                               & 94.7                 &          &                          & 92.2/\--                              &                &                               & 96.4      &    \--        \\ \hline
Baseline (B=32)                         & 1x1                  & 32                          & 73632   & 43090     & 90.5  & 19644  & 14188     & 94.7 & 68226 & 40945    & 92.0/89.4 & 12630  & 4940      & 96.4  & 92.5    \\ \hline
Baseline (B=32)                           & 2x16                 & 32                          & 73632   & 18114     & 90.5  & 19644  & 4842      & 94.7 & 68226 & 16614    & 92.0/89.4 & 12630  & 3072      & 96.4  & 92.5    \\ \hline
\name (B=1K)                         & 2x16                 & 1K                          & 2301    & \textbf{3363}      & \textbf{90.9}  & 615    & \textbf{1168}      & \textbf{95.1} & 2133  & \textbf{2404}     & \textbf{92.3/89.8} & 396    & \textbf{401}       & \textbf{96.7}  & \textbf{92.9}    \\ \hline
\end{tabular}
\label{tbl:downstream-accuracy}
\end{table*}

    \textbf{\name accelerates adaptation time.}
    Compared with single-GPU training, the multi-GPU baseline leads to only modest training speedup improvements, e.g., with $1.5 - 2.4 \times$ faster training speed for both BERT and RoBERTa, even with $32 \times$ more compute resources. The speedup is limited because of the small mini-batches (e.g., 32) used for adaptation, which do not provide a sufficient workload to fully utilize the underlying hardware. Thus, communication overhead becomes the dominant part, and the adaptation often struggles to obtain speedups even with more workers.
    In contrast, \name achieves up to $18 \times$ speedups over the single-GPU baseline with 32 GPUs. When using the same number of GPUs (e.g., 32), \name is 2.7--9.8$\times$ faster. The speedups come from three aspects: (1) the improved hardware efficiency for each worker from increased per-worker micro-batch size; (2) the reduced all-reduce communication overhead since it takes fewer communication rounds to process the same number of samples in one epoch; (3) the lightweight adversarial noise incurs only a small portion of the total training overhead. Finally, \name obtains the speedups while achieving the same accuracy (88.4 vs. 88.4) average accuracy for BERT and higher accuracy (92.9 vs. 92.5) for RoBERTa as the baselines. 
    
    
    \textbf{\name improves generalizability.} Since there are very few works on large-batch adaptation, we create several baselines to compare with \name: (1) Multi-GPU + LB + Tuning LR: This configuration uses large mini-batches (e.g., 1K), and applies heuristic-based scheduling rule (e.g., square root) combined with an extensive grid search for learning rates; (2) Multi-GPU + LB + LAMB: Uses LAMB~\citep{lamb} optimizer for large-batch adaptation.  
    We make several observations from the results in Table~\ref{tbl:downstream-accuracy-all-large-batch}. First, compared with the baseline accuracy reported in the paper, the accuracy of Multi-GPU + LB drops by close to 1 point (88.4 vs. 89.4, and 92.1 vs. 92.9) in average and close to 2 points for some tasks (e.g., QQP on BERT), indicating that it is challenging to obtain on-par accuracy with large-batch optimizations for adaptation despite with heavy hyperparameter tuning. Second, 
    since LAMB is designed primarily for improving the convergence of pre-training instead of the adaptation, its ability to accelerate the adaptation has yet to be proven. In our experiments, LAMB leads to only marginal improvements (88.6 vs. 88.4, and 92.1 vs. 92.1) than the baseline and is 0.8 points lower than the small-batch baseline. This is because LAMM does not directly minimize the sharpness of the loss landscape, so it can still lead to poor generalizability during adaptation. With \name, we are able to close the generalization gap from large-batch optimization (89.4 vs. 89.4, and 92.5 vs. 92.9) and achieve 0.8 points higher accuracy (89.4 vs. 88.6, 92.9 vs. 92.1) than LAMB on both BERT and RoBERTa. \name improves generalizability because it introduces adversarial noise in the large-batch optimization process, which serves as a regularizer. By training the network to be robust to such perturbations, the model loss landscape is smoothed out, leading to improved generalization.  



    
    \begin{table*}[!ht]
	\newcommand{\smallcolspc}{\hspace*{0.3em}}
	\newcommand{\colspc}{\hspace*{0.28em}}
    \small
    \renewcommand\sfsmaller{}
    \centering
    \caption{The comparison results between \name and alternative methods for large-batch adaptation on the GLUE benchmark, which show that \name achieves higher accuracy than baselines after training the same number of samples and steps.}
\begin{tabular}{|@{\smallcolspc}c@{\smallcolspc}|@{\smallcolspc}c@{\smallcolspc}|@{\smallcolspc}c@{\smallcolspc}|@{\smallcolspc}c@{\smallcolspc}|@{\smallcolspc}c@{\smallcolspc}|@{\smallcolspc}c@{\smallcolspc}|@{\smallcolspc}c@{\smallcolspc}|@{\smallcolspc}c@{\smallcolspc}|@{\smallcolspc}c@{\smallcolspc}|@{\smallcolspc}c@{\smallcolspc}|@{\smallcolspc}c@{\smallcolspc}|@{\smallcolspc}c@{\smallcolspc}|@{\smallcolspc}c@{\smallcolspc}|@{\smallcolspc}c@{\smallcolspc}|@{\smallcolspc}c@{\smallcolspc}|@{\smallcolspc}c@{\smallcolspc}|}
\hline
\multirow{2}{*}{BERT\textsubscript{base}}         & \multirow{2}{*}{n$\times$g} & \multirow{2}{*}{Batch} & \multicolumn{3}{c|}{\textbf{MNLI-m}} & \multicolumn{3}{c|}{\textbf{QNLI}} & \multicolumn{3}{c|}{\textbf{QQP}} & \multicolumn{3}{c|}{\textbf{SST-2}} & \multirow{2}{*}{Avg.} \\ \cline{4-15} 
                  &                      &      size                       & Steps     & Time     & Acc.      & Steps     & Time     & Acc.    & Steps   & Time   & Acc/F1     & Steps     & Time     & Acc.   &  \\ \hline
Baseline (B=1K)                               & 2x16                  & 1K                           & 2301  & {1148}                             & 84.3          & 615           & {349}                  & 89.3                  & 2133     & {2892}            & 89.6/86.1
         & 396           & {134}                  & 93  & 88.4                  \\ \hline
LAMB (B=1K)                 & 2x16                  & 1K                           & 2301  & 1180 & 84.1          & 615           & 359         & 89.6                  & 2133     & 2978    & 90.5/87.0          & 396           & 139         & 92.4   & 88.6               \\ \hline
\name (B=1K)                         & 2x16                  & 1K                           & 2301  & 1323                             & \textbf{85.1} & 615           & 432                  & \textbf{90.0}                    & 2133     & 4229          & \textbf{90.9/87.7}                      & 396           & 151                  & \textbf{93.5}     & \textbf{89.4}    \\ \hline
\hline
\multirow{2}{*}{RoBERTa\textsubscript{large}}         & \multirow{2}{*}{n$\times$g} & \multirow{2}{*}{Batch} & \multicolumn{3}{c|}{\textbf{MNLI-m}} & \multicolumn{3}{c|}{\textbf{QNLI}} & \multicolumn{3}{c|}{\textbf{QQP}} & \multicolumn{3}{c|}{\textbf{SST-2}} & \multirow{2}{*}{Avg.} \\ \cline{4-15} 
                  &                      &      size                       & Steps     & Time     & Acc.      & Steps     & Time     & Acc.    & Steps   & Time   & Acc/F1     & Steps     & Time     & Acc.   &  \\ \hline
Baseline (B=1K)                                  & 2x16                 & 1K                          & 2301    & 2514      & 90.1  & 615    & 936       & 94.3 & 2133  & 1874     & 91.7/89.1 & 396    & 317       & 95.9  & 92.1    \\ \hline
LAMB (B=1K)                          & 2x16                 & 1K                          & 2301    & 2646      & 90.5  & 615    & 973       & 94.5 & 2133  & 1998     & 91.3/88.5 & 396    & 324       & 96.2  & 92.1   \\ \hline
\name (B=1K)                          & 2x16                 & 1K                          & 2301    & 3363      & \textbf{90.9}  & 615    & 1168      & \textbf{95.1} & 2133  & 2404     & \textbf{92.3/89.8} & 396    & 401       & \textbf{96.7}  & \textbf{92.9}    \\ \hline
\end{tabular}
\label{tbl:downstream-accuracy-all-large-batch}
\end{table*}

\vspace{-2mm}
\subsection{Experiment -- Analysis Results}
\vspace{-1mm}
\textbf{Ablation analysis:} In this part, we study the importance of components in \name.
We set $t_s$ to 0, which denotes as \emph{w/o Delaying PGA-1}. We replace the outer minimization to use ADAM~\citep{adam}, which is noted as \emph{w/o Groupwise LR}. We set $\lambda$ to 0, which denotes as \emph{w/o PGA-1}. The results are reported in Table~\ref{tbl:ablation}. 

\begin{figure*}[ht!]
 \begin{minipage}[c]{0.66\textwidth}
    \captionof{table}{Ablation study of \name using BERT$_{base}$ on GLUE tasks.}
\label{tbl:ablation}
	\newcommand{\smallcolspc}{\hspace*{0.28em}}
	\newcommand{\colspc}{\hspace*{0.28em}}
    \small
    \renewcommand\sfsmaller{}
    \centering
\tabcolsep=0.10cm
\begin{tabular}{|@{\smallcolspc}c@{\smallcolspc}|@{\smallcolspc}c@{\smallcolspc}|@{\smallcolspc}c@{\smallcolspc}|@{\smallcolspc}c@{\smallcolspc}|@{\smallcolspc}c@{\smallcolspc}|@{\smallcolspc}c@{\smallcolspc}|@{\smallcolspc}c@{\smallcolspc}|@{\smallcolspc}c@{\smallcolspc}|@{\smallcolspc}c@{\smallcolspc}|@{\smallcolspc}c@{\smallcolspc}|}
\hline
\multirow{2}{*}{}         & \multicolumn{2}{c|}{\textbf{MNLI-m}} & \multicolumn{2}{c|}{\textbf{QNLI}} & \multicolumn{2}{c|}{\textbf{QQP}} & \multicolumn{2}{c|}{\textbf{SST-2}} & \multirow{2}{*}{\textbf{Avg.}} \\ \cline{2-9} 
                          &       Time     & Acc.     & Time     & Acc.   & Time   & Acc/F1    & Time     & Acc.   &  \\ \hline
BERT                     & 19635                                     & 84.8         & 5535                          & {90.6}       & 16494                    & \multicolumn{1}{l|}{91/88.0}       & 2736                          & 93.1      & 89.4            \\ \hline
\name                    & 1323                             & {85.1}   & 432                  & 90          & 4229          & 90.9/87.7                       & 151                  & {93.5}  & 89.4       \\ \hline
w/o Delaying PGA-1        & 2503                                      & 85.2          & 726                           & 90.2             & 6407                     & {91.3/88.3}     & 272                           & 93.1       & 89.5           \\ \hline
w/o Groupwise LR          & 1290 & 85.0          & 422         & 89.9     & 4212    & 90.7/87.6      & 146         & 93.0   & 89.2               \\ \hline
w/o PGA-1              & 1180 & 84.1          & 359         & 89.6      & 2978    & 90.5/87.0          & 139         & 92.4   & 88.6  \\ \hline 
\end{tabular}
 \end{minipage}%
 ~
 \begin{minipage}[c]{0.31\textwidth}
    \captionof{table}{Comparison results with FreeLb.}
    \label{tbl:comparison-with-freelb}
	\newcommand{\smallcolspc}{\hspace*{0.28em}}
	\newcommand{\colspc}{\hspace*{0.28em}}
    \small
    \renewcommand\sfsmaller{}
    \centering
\tabcolsep=0.10cm
\begin{tabular}{|@{\smallcolspc}r@{\smallcolspc}|@{\smallcolspc}c@{\smallcolspc}|@{\smallcolspc}c@{\smallcolspc}|@{\smallcolspc}c@{\smallcolspc}|@{\smallcolspc}c@{\smallcolspc}|}
\hline
\multirow{2}{*}{} & \multicolumn{2}{l|}{\textbf{MNLI-m}} & \multicolumn{2}{l|}{\textbf{SST-2}} \\ \cline{2-5} 
                  & Acc.              & Time             & Acc.             & Time             \\ \hline
Baseline          & 84.8              & 8848             & 93.1             & 2736             \\ \hline
FreeLb            & \textbf{85.1}              & 3773             & 93.3             & 389              \\ \hline
ScaLA             & \textbf{85.1}              & \textbf{1323}             & \textbf{93.5}             & \textbf{151}              \\ \hline
\end{tabular}
 \end{minipage}%
\end{figure*}

The results in Table~\ref{tbl:ablation} show that the removal of either design element would result in a performance drop. For example, removing PGA-1 leads to 0.8 points accuracy drop (88.6 vs. 89.4), indicating that adversarial noise is crucial for improving the generalizability of large-batch adaptation. Moreover, if we perform PGA-1 without delayed injection, the average accuracy increases by 0.1 points (89.5 vs. 89.4), but the execution time is increased by 1.5--1.9x, indicating the importance of having lightweight adversarial noise for speeding up the adaptation. Finally, removing group-wise learning rates leads to a small 0.2 points accuracy drop (89.2 vs. 89.4), indicating that \name still achieves benefits without group-wise learning rates (89.2 vs. 88.6), but they are complementary to each other.

\begin{table}[ht!]
	\newcommand{\smallcolspc}{\hspace*{0.28em}}
	\newcommand{\colspc}{\hspace*{0.28em}}
    \small
    \renewcommand\sfsmaller{}
    \centering
    \caption{Alternative methods to generate perturbations using random noise, ground-truth, and label probability.}
    \label{tbl:other-perturbations}
\tabcolsep=0.10cm
\begin{tabular}{|@{\smallcolspc}r@{\smallcolspc}|@{\smallcolspc}c@{\smallcolspc}|@{\smallcolspc}c@{\smallcolspc}|@{\smallcolspc}c@{\smallcolspc}|@{\smallcolspc}c@{\smallcolspc}|@{\smallcolspc}c@{\smallcolspc}|}
\hline
{Model}      & 
\begin{tabular}[c]{@{}l@{}}MNLI-m \end{tabular}  &
\begin{tabular}[c]{@{}l@{}}QNLI \end{tabular} & 
\begin{tabular}[c]{@{}l@{}}QQP \end{tabular} &
\begin{tabular}[c]{@{}l@{}}SST-2 \end{tabular} & 
Avg \\ \hline
Baseline	& 84.3	& 89.3	& 89.6/86.1	& 93  &	88.4  \\ \hline
Gaussian noise    & 84.5          & 89.4          & 90.3/87.0    & 92.6         & 88.7         \\ \hline
ScaLA (GT)      & 84.1          & 89.6          & 90.7/87.6             & 93.2         & 89.0         \\ \hline
ScaLA (LP) & \textbf{85.1}          & \textbf{90}            & \textbf{90.9/87.7}             & \textbf{93.5}         & \textbf{89.4}         \\ \hline
\end{tabular}
\end{table}

\begin{figure*}[!ht]
 \begin{minipage}[c]{0.28\textwidth}
 \centering
   \includegraphics[scale=0.35, keepaspectratio=true]{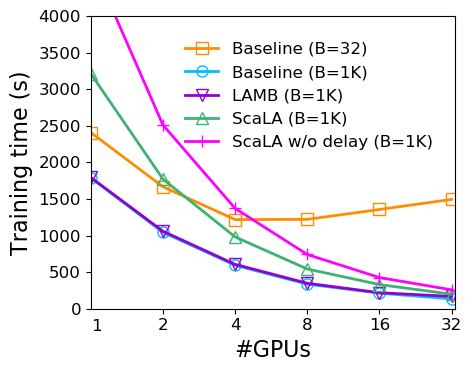}
  \caption{Comparison of scalability using different large-batch optimization methods on SST-2. }
  \label{fig:scalability-varying-gpus-sst}
 \end{minipage}%
 ~
 \begin{minipage}[c]{0.28\textwidth}
 \centering
    \includegraphics[scale=0.33, keepaspectratio=true]{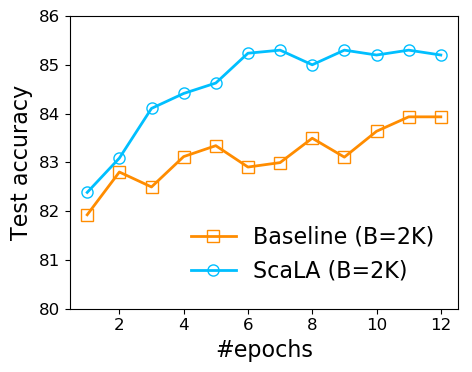}
    \caption{Comparison of test accuracy by training the baseline longer. }
    \label{fig:train-longer-mnli}
  \end{minipage}
  \begin{minipage}[c]{0.4\textwidth}
    \subfigure[MNLI-m]{\includegraphics[scale=0.4, keepaspectratio=true]{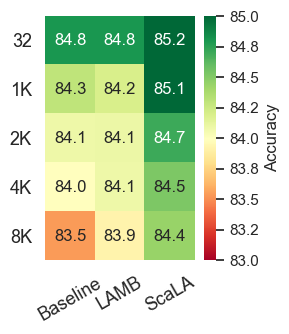}\label{fig:varying-batch-size-MNLI-m}}
    \subfigure[SST-2]{\includegraphics[scale=0.4, keepaspectratio=true]{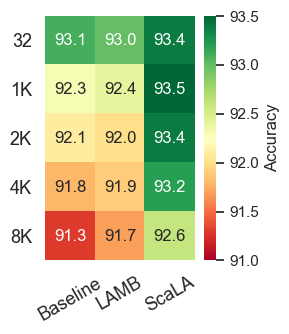}\label{fig:varying-batch-size-SS}}
  \caption{Comparison of accuracy under even larger batch sizes.}\label{fig:varying-batch-comparison}
 \end{minipage}%
\end{figure*}

\textbf{Curvature analysis.} We measure the steepness of the loss landscape again after applying \name. As shown in Fig.~\ref{fig:eigenvalue}, the largest eigenvalue of the model becomes much smaller (6.9$\times$) with lower deviations with \name and is slightly better than the small batch baseline, which is a strong indication that our approach enforces the smoothness of the model that leads to the accuracy improvement. 

\textbf{Comparison with random noise.} We have performed additional experiments by adding Gaussian noise to the embeddings. Table~\ref{tbl:other-perturbations} that random noise indeed can improve the accuracy for MNLI-m (84.3 vs. 84.5), QNLI (89.3 vs. 89.4), and QQP (90.3/87.0 vs. 89.6/86.1) over the baseline, but it also leads to worse results on SST-2 (93. vs. 92.6). Compared with \name, random noise consistently falls behind \name in its ability to reduce the generalization error on all tested tasks and is on average 0.7 points lower than \name (88.7 vs. 89.4). These results indicate that \name's approach of explicitly enforcing the smoothness of the loss landscape can result in better improvement.  

\textbf{Perturbations via ground-truth vs. label probability.} We also create one-hot labels and use those to generate perturbations instead of using label probability generated by the network. Table~\ref{tbl:other-perturbations} shows that using label probability (LP) consistently leads to higher accuracy than using the ground-truth (GT), e.g., 89.4 vs. 89.0 on average. Label probability leads to better generalization, probably because it provides a better measurement of the adversarial direction, which is the direction in the input space in which the label probability of the model is most sensitive to small perturbations. 

\textbf{Comparison with FreeLb.} We include a comparison between \name and FreeLb~\citep{freelb}. 
Although both FreeLb and \name achieve similar accuracy, \name is much faster than FreeLb (Table~\ref{tbl:comparison-with-freelb}). \name is faster because FreeLb is not explicitly designed for accelerating the adaptation speed and performs multiple ascent steps to calculate adversaries cross the full training process. In contrast, \name takes several optimizations to reduce the adversarial noise cost to enable efficient training with large batch sizes, which improves the overall computational efficiency.

\textbf{Scalability analysis varying GPUs.} 
Figure~\ref{fig:scalability-varying-gpus-sst} shows the scalability comparison on SST-2 after optimizations. While the speedup still plateaus at $4$ GPUs with a small batch size (e.g., $B=32$), the four large-batch configurations are able to scale well up to $32$ GPUs and take a similar amount of time with $32$ GPUs. \name scales better than \name without delaying PGA-$1$, and achieves a much faster training speed, especially in the 1-16 GPU range. 

\textbf{Train longer, generalize better?} Despite improved adaptation speed, one may still wonder whether simply performing large-batch adaptation longer would also close the generalization gap. 
Figure~\ref{fig:train-longer-mnli} shows the comparison between \name and the baseline on a batch size of 2K. \name obtains an accuracy of 85.2 after 6 epochs of training, whereas the baseline has difficulty to reach 84 after training twice longer (e.g., 12 epochs). \name achieves better accuracy because it explicitly penalizes model weights from getting stuck at sharp minima, leading to better generalizability. 

\textbf{Generalizability under different batch sizes.}
We also evaluate how different batch sizes affect the generalizability of adapting transformers. 
Figure~\ref{fig:varying-batch-comparison} shows the results on MNLI-m and SST-2. We make two major observations: (1) The accuracy tends to drop as the batch size increases. (2) While both the baseline and LAMB suffer from significant accuracy drop by drastically increasing the batch size (e.g., from 32 to 8K), \name is able to mitigate the generalization gap and consistently achieves higher accuracy than the baseline (e.g., 84.4 vs. 83.5 for MNLI, and 92.6 vs. 91.3 for SST-2 at batch size 8K) and LAMB (e.g., 84.4 vs. 83.9 for MNLI, and 92.6 vs. 91.7 for SST-2 at batch size 8K). These results indicate the benefit of \name is maintained by further increasing the batch size, which could bring even greater speedups when increasing the data parallelism degree.

\vspace{-3mm}
\section{Conclusions and Future Directions}
\label{sec:conclusion}
\vspace{-3mm}

In this paper, we study how to accelerate the adaptation speed of pre-trained Transformer models for NLU tasks. 
We introduce \name, an efficient large-batch adaptation method using carefully injected lightweight adversarial noises. 
The experiment results show that \name obtains up to 9.8$\times$ speedups on adapting transformer networks and outperforms state-of-the-art large-batch optimization methods in generalizability. Given the promising results of \name on accelerating the adaptation speed, it opens new research opportunities on applying \name to accelerate the more expensive pre-training tasks as well as emerging pre-trained transformer networks for computer vision domains tasks. 

\clearpage
\bibliography{references}
\bibliographystyle{icml2022}

\clearpage
\appendix

\section{Additional Results}
\label{sec:results}

In the part, we present results that are not included in the main text due to the space limit.

\subsection{The Usefulness of Adversarial Noises at Different Epochs}
\label{subsec:perturbation-usefulness}

\begin{figure}[h!]
 \centering
  \begin{minipage}[c]{0.4\textwidth}
   \centering
    \subfigure[MNLI-m]{\includegraphics[scale=0.5, keepaspectratio=true]{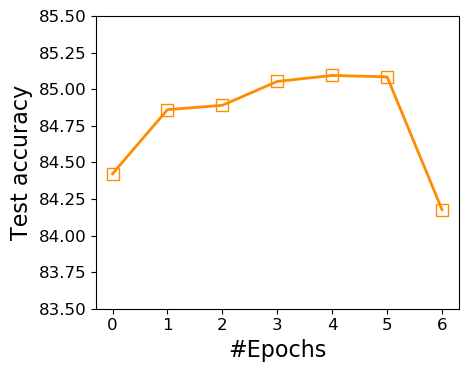}\label{fig:varying-switch-point-mnli}}
 \end{minipage}%
 \hfill
  \begin{minipage}[c]{0.4\textwidth}
   \centering
    \subfigure[SST-2]{\includegraphics[scale=0.5, keepaspectratio=true]{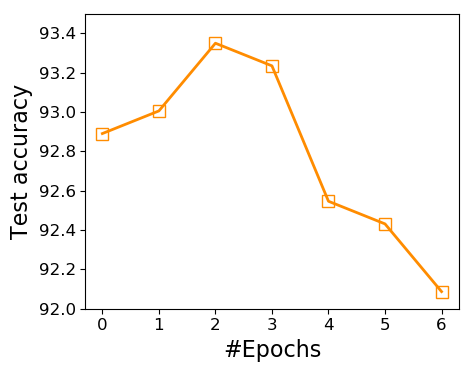}\label{fig:varying-switch-point-sst}}
 \end{minipage}%
 \caption{Accuracy results from delaying the injection of adversarial noises at different epochs.}
 \label{fig:varying-switch-point}
\end{figure}

In Section~\ref{sec:formulation}, we mention that no adversaries are needed at the initial epochs of adaptation.
To verify, we conduct experiments to measure the final accuracy corresponding to starting from regular training and switching to PGA-1 after $t_s$ epochs, where $t_s \in [T]$. Figure~\ref{fig:varying-switch-point} shows that enabling PGA-1 from the very beginning does not offer much improvement on accuracy. However, as we delay the injection of adversarial noises, the model accuracy starts to increase. 
By delaying the injection of adversarial noises, we observe improved test accuracy on downstream tasks. However, it also seems that the adversarial noise should not be injected too late, which may inadvertently affect the accuracy. It is possible that a more advanced method to adaptively choose the value of $t_s$ is desired. However, given that (1) the primary focus of this work is to demonstrate that it is possible and effective to accelerate the adaptation of transformer networks via large-batch adaptation and adversarial noises and (2) the search space of is quite small for most downstream tasks, we leave this as an interesting research question for future exploration.

\section{Hyperparameters}
\label{sec:hyperparameters} 
For all configurations, we fine-tune against the GLUE datasets and set the maximum number of epochs to 6. We use a linear learning rate decay schedule with a warm-up ratio of 0.1. For \name, we set $\lambda=1$, perturbation clipping radius $\omega=10^{-5}$, step size $\rho=10^{-4}$, and $t_s$=\{3,5\}. These values worked well enough that we did not feel the need to explore more. For fairness, we perform a grid search of learning rates in the range of $\{$1e-5, 3e-5, 5e-5, 7e-5, 9e-5, 1e-4, 3e-4$\}$ for small batch sizes and $\{$5.6e-5, 8e-5, 1e-4, 1.7e-4, 2.4e-4, 2.8e-4, 4e-4, 5.6e-4, 1e-3$\}$ for large batch sizes. We keep the remaining hyperparameteres unchanged.

\section{Hyperparameter Tuning Cost for Large-Batch Adaptation with \name}
\label{sec:lr-scaling}

In this part, we investigate how large-batch adaptation affects the generalizability of transformer networks on downstream tasks. 
As there are various heuristics for setting the learning rates~\citep{do-not-decay-lr,train-imagenet-in-1hour,disciplined-hyperparameters,lamb},  and because few work studies the learning rate scaling effects on adapting pre-trained Transformer networks, we perform a grid search on learning rates $\{$1e-4, 3e-4,5e-4, 7e-4, 9e-4, 1e-3, 3e-3$\}$ and batch sizes $\{$1K, 2K, 4K, 8K$\}$ while keeping the other hyperparameters the same to investigate how \name affects the hyperparameter tuning effort. 
Table~\ref{tbl:lr-scaling-tuning} shows the results of using the square root scaling rule to decide the learning rates for large batch sizes vs. accuracy results with tuned learning rate results, without and with ScaLA. The first row represents the best accuracy found through fine-tuning with a small batch size 32. The next two rows correspond to fine-tuning with batch size 1024 using tuned learning rates vs. using the scaling rule. The last two rows represent fine-tuning using ScaLA with batch size 1024, also using tuned learning rates vs. the scaling rule. Even with square-root scaling, the large-batch baseline still cannot reach the small-batch accuracy (88.7 vs. 89.4). Moreover, although tuning the learning rates lead to better results on some datasets such as MNLI-m (84.9 vs. 85.1) and SST-2 (92.9 vs. 93.5), the square-root scaling rule leads to better results on other tasks such as QNLI (90.8 vs. 90) and QQP (91.4/88.4 vs. 90.9/87.7). So the best learning rates on fine-tuning tasks are not exactly sqrt. However, given that ScaLA with square-root learning rate scaling achieves on average better results than the grid search of learning rates (89.4 vs. 89.7), we suggest to use sqrt scaling for learning rates to simplify the hyperparameter tuning effort for ScaLA.

\onecolumn

\begin{table*}[!ht]
\caption{Evaluation results on hyperparameter tuning vs. using square-root learning rate scaling.}
\label{tbl:lr-scaling-tuning}
	\newcommand{\smallcolspc}{\hspace*{0.28em}}
	\newcommand{\colspc}{\hspace*{0.28em}}
    \small
    \renewcommand\sfsmaller{}
    \centering
\tabcolsep=0.10cm
\begin{tabular}{|@{\smallcolspc}l@{\smallcolspc}|@{\smallcolspc}c@{\smallcolspc}|@{\smallcolspc}c@{\smallcolspc}|@{\smallcolspc}c@{\smallcolspc}|@{\smallcolspc}c@{\smallcolspc}|@{\smallcolspc}c@{\smallcolspc}|}
\hline
{}      & 
\begin{tabular}[c]{@{}l@{}}MNLI-m \end{tabular}  &
\begin{tabular}[c]{@{}l@{}}QNLI \end{tabular} & 
\begin{tabular}[c]{@{}l@{}}QQP \end{tabular} &
\begin{tabular}[c]{@{}l@{}}SST-2 \end{tabular} & 
Avg \\ \hline
Bsz=32 (tuned, baseline)          & 84.8            & 90.6          & 91/88                 & 93.1          & 89.4         \\ \hline
Bsz=1024 (tuned, baseline)        & 84.3            & 89.3          & 89.6/86.1             & 93            & 88.5         \\ \hline
Bsz=1024 (scaling rule, baseline) & 83.9            & 89.2          & 90.6/87.4             & 92.5          & 88.7         \\ \hline
Bsz=1024 (tuned, ScaLA)           & 85.1            & 90            & 90.9/87.7             & 93.5          & 89.4         \\ \hline
Bsz=1024 (scaling rule, ScaLA)    & 84.9            & 90.8          & 91.4/88.4             & 92.9          & 89.7         \\ \hline
\end{tabular}
\end{table*}


\section{Convergence Analysis}
\label{sec:proof}
In this section, we provide the formal statements and detailed proofs for the convergence rate. The convergence analysis builds on techniques and results in~\citep{davis2018stochastic, you2019large}. We consider the general problem of a two-player sequential game represented as nonconvex-nonconcave minimax optimization that is stochastic with respect to the outer (first) player playing $x \in \mathbb{X}$ while sampling $\xi$ from $Q$ and deterministic with respect to the inner (second) player playing $y \in \mathbb{Y}$, i.e.,
\begin{align}
\min_{x} \max_{y} \mathbb{E}_{\xi \sim Q}[f(x, y, \xi)] := \min_{x} \mathbb{E}_{\xi \sim Q}[g(x, \xi)]
\end{align}
Since finding the Stackelberg equilibrium, i.e., the global solution to the saddle point problem, is NP-hard, we consider the optimality notion of a \textit{local minimax} point~\citep{jin2020local}. Since maximizing over $y$ may result in a non-smooth function even when $f$ is smooth, 
the norm of the gradient is not particularly a suitable metric to track the convergence progress of an iterative minimax optimization procedure. Hence, we use the gradient of the \textit{Moreau envelope}~\citep{davis2019stochastic} as the appropriate potential function. Let $\mu \in \mathbb{R}_+^h$. The $\mu$-Moreau envelope for a function $g: \mathbb{X} \to \mathbb{R}$ is defined as $g_{\mu}(x) := \min_z g(z) + \sum_{i=1}^h \frac{1}{2 \mu^i} \| x^i-z^i \|^2$. Another reason for the choice of this potential function is due to the special property~\citep{rockafellar2015convex} of the Moreau envelope that if its gradient $\nabla_x [g_{\mu}(x)]$ almost vanishes at $x$, such $x$ is close to a stationary point of the original function $g$. 

\textbf{Assumptions: }
We assume that $\mathbb{X} = \bigsqcup_{i=1}^{h} \mathbb{X}^i$ is partitioned into $h$ disjoint groups 
, i.e., in terms of training a neural network, we can think of the network having the parameters partitioned into $h$ (hidden) layers. The measure $Q$ characterizes the training data. Let $\widehat{\nabla}_x f(x, y)$ denote the noisy estimate of the true gradient $\nabla_x f(x, y)$. We assume that the noisy gradients are unbiased, i.e., $\mathbb{E} [ \widehat{\nabla}_x f(x, y) ] = \nabla_x f(x, y)$. For each group $i \in [h]$, we make the standard (groupwise) boundedness assumption~\citep{ghadimi2013stochastic} on the variance of the stochastic gradients, i.e., $\mathbb{E} \|  \widehat{\nabla}^i_x f(x, y) - \nabla^i_x f(x, y) \|^2 \leq \sigma^2_i$, $\forall i \in [h]$. We assume that $f(x,y)$ has Lipschitz continuous gradients. Specifically, let $f(x,y)$ be $\alpha$-smooth in $x$ where $\alpha := (\alpha_1, \ldots, \alpha_h)$ denotes the $h$-dimensional vector of (groupwise) Lipschitz parameters, i.e., $\|  \nabla^i_x f(x_a, y) - \nabla^i_x f(x_b, y) \| \leq \alpha_i \| x_a^i - x_b^i\|$, $\forall i \in [h]$ and $x_a, x_b \in \mathbb{X}, y \in \mathbb{Y}$. Let $\kappa_\alpha := \frac{\max_i \alpha_i}{\min_i \alpha_i}$.

Super-scripts are used to index into a vector ($i$ denotes the group index and $j$ denotes an element in group $i$). For any $c \in \mathbb{R}$, the function $\nu: \mathbb{R} \rightarrow [\mathcal{L}, \mathcal{U}]$ clips its values, i.e., $\nu(c) := \max(\mathcal{L}, \min(c, \mathcal{U}))$ where $\mathcal{L} < \mathcal{U}$. Let $\|.\|$, $\|.\|_1$ and $\|.\|_\infty$ denote the $\ell_2$, $\ell_1$, and $\ell_\infty$ norms. We assume that the true gradients are bounded, i.e., $\| \nabla_x f(x,y) \|_\infty \leq \mathcal{G}$.

First, we begin with relevant supporting lemmas. The following lemma characterizes the convexity of an additive modification of $g$. 
\begin{lemma}[\citep{lin2020gradient, jin2020local, rafique2021weakly}]
\label{lem:weak_cvx}
Let $g(x) := \max_y f(x,y)$ with $f$ being $\alpha$-smooth in $x$ where $\alpha \in \mathbb{R}_+^h$ is the vector of groupwise Lipschitz parameters. Then, $g(x) + \sum_{i=1}^{h} \frac{\alpha_i}{2} \| x^i \|^2$ is convex in $x$.
\end{lemma}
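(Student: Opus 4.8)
The plan is to prove that $h(x) := g(x) + \sum_{i=1}^{h} \frac{\alpha_i}{2}\|x^i\|^2$ is convex, where $g(x) = \max_y f(x,y)$. The standard approach for this kind of \emph{weak convexity} result is to exploit the fact that a pointwise supremum of convex functions is convex. The key observation is that $\alpha$-smoothness of $f(\cdot, y)$ in $x$ implies, for each \emph{fixed} $y$, that the function $x \mapsto f(x,y) + \sum_{i=1}^{h}\frac{\alpha_i}{2}\|x^i\|^2$ is convex. Granting this, we write
\begin{align*}
h(x) = \max_y f(x,y) + \sum_{i=1}^{h}\frac{\alpha_i}{2}\|x^i\|^2 = \max_y \Big( f(x,y) + \sum_{i=1}^{h}\frac{\alpha_i}{2}\|x^i\|^2 \Big),
\end{align*}
which is a pointwise maximum over $y$ of functions that are each convex in $x$, and is therefore convex.

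The crux of the argument is the first claim, so I would establish it first. The groupwise smoothness assumption gives $\|\nabla^i_x f(x_a,y) - \nabla^i_x f(x_b,y)\| \le \alpha_i \|x_a^i - x_b^i\|$ for every group $i$. I would use the characterization that a differentiable function is convex if and only if its gradient is monotone. Define $\phi_y(x) := f(x,y) + \sum_{i=1}^{h}\frac{\alpha_i}{2}\|x^i\|^2$. Its gradient in group $i$ is $\nabla^i_x \phi_y(x) = \nabla^i_x f(x,y) + \alpha_i x^i$. To check monotonicity of $\nabla \phi_y$, I would compute, for any $x_a, x_b$,
\begin{align*}
\langle \nabla_x \phi_y(x_a) - \nabla_x \phi_y(x_b), x_a - x_b \rangle = \sum_{i=1}^{h} \Big( \langle \nabla^i_x f(x_a,y) - \nabla^i_x f(x_b,y), x_a^i - x_b^i \rangle + \alpha_i \|x_a^i - x_b^i\|^2 \Big),
\end{align*}
and then bound each inner product below using Cauchy--Schwarz together with the Lipschitz bound, giving $\langle \nabla^i_x f(x_a,y) - \nabla^i_x f(x_b,y), x_a^i - x_b^i \rangle \ge -\alpha_i \|x_a^i - x_b^i\|^2$. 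Substituting, each summand is nonnegative, so the whole inner product is $\ge 0$, establishing monotonicity and hence convexity of $\phi_y$.

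The main obstacle I anticipate is being careful about the \emph{groupwise} structure rather than assuming a single global smoothness constant: the smoothness bound is stated per group $i$, so the regularizer must use the matching per-group constant $\alpha_i$, and the monotonicity estimate must be done group by group and summed. A secondary technical point is that $g(x) = \max_y f(x,y)$ need not be differentiable even when $f$ is smooth (which is precisely why the lemma is phrased in terms of convexity rather than a gradient condition), so the final step must invoke the preservation of convexity under pointwise maximization directly at the level of function values, not gradients. If one prefers to avoid the monotone-gradient route, an equivalent plan is to apply the second-order / descent-lemma form of smoothness, $f(x_a,y) \ge f(x_b,y) + \langle \nabla_x f(x_b,y), x_a - x_b\rangle - \sum_i \frac{\alpha_i}{2}\|x_a^i - x_b^i\|^2$, and check the first-order convexity inequality for $\phi_y$ directly; both routes reduce to the same per-group cancellation.
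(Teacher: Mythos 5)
Your proof is correct. Note that the paper does not actually prove this lemma --- it is stated with citations to \citet{lin2020gradient}, \citet{jin2020local}, and \citet{rafique2021weakly} and used as a black box --- so there is no in-paper argument to compare against; your argument (groupwise smoothness plus Cauchy--Schwarz gives monotonicity of the gradient of $f(\cdot,y)+\sum_i \tfrac{\alpha_i}{2}\|x^i\|^2$ for each fixed $y$, then convexity is preserved under the pointwise maximum over $y$) is precisely the standard proof of weak convexity of a max-function found in those references, correctly adapted to the groupwise constants, and your caveat about arguing at the level of function values because $g$ need not be differentiable is the right one.
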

The following property of the Moreau envelope relates it to the original function.
\begin{lemma}[\citep{rockafellar2015convex}]
\label{lem:original}
Let $g$ be defined as in Lemma~\ref{lem:weak_cvx}. Let $\widehat{x} = \arg \min_{\widetilde{x}} g(\widetilde{x}) + \sum_{i=1}^{h} \frac{1}{2\mu^i} \| \widetilde{x}^i-x^i \|^2$. Then, $\| g_\mu(x) \| \leq \epsilon$ implies $\| \widehat{x} - x \| \leq \| \mu \|_\infty \epsilon$ and $\min_h \| h \| \leq \epsilon$ with $h \in \partial g$ where $\partial g$ denotes the subdifferential of $g$.
\end{lemma}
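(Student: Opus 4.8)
The plan is to read $\| g_\mu(x) \|$ as $\| \nabla g_\mu(x) \|$ (the abuse of notation flagged just before the lemma, matching the Moreau-envelope gradient used in Theorem~\ref{thm:oracle}) and to derive both conclusions from the first-order optimality condition of the proximal subproblem together with the standard gradient identity for the Moreau envelope. First I would record that the statement implicitly requires an admissible proximal parameter: by Lemma~\ref{lem:weak_cvx} the map $z \mapsto g(z) + \sum_{i=1}^h \frac{\alpha_i}{2}\|z^i\|^2$ is convex, so whenever $\mu$ satisfies $1/\mu^i > \alpha_i$ groupwise (which holds for the choice $\mu = 1/(2\alpha)$ used in Theorem~\ref{thm:oracle}) the proximal objective $\psi(z) := g(z) + \sum_{i=1}^h \frac{1}{2\mu^i}\|x^i - z^i\|^2$ is $\min_i(1/\mu^i - \alpha_i)$-strongly convex. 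Consequently the minimizer $\widehat{x}$ is unique and $g_\mu$ is continuously differentiable.

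Next I would extract the subgradient. Writing $R(z) := \sum_{i=1}^h \frac{1}{2\mu^i}\|x^i - z^i\|^2$, which is smooth, the sum rule for subdifferentials applied at the unconstrained minimizer $\widehat{x}$ gives $0 \in \partial g(\widehat{x}) + \nabla R(\widehat{x})$. Since $\nabla^i R(\widehat{x}) = \frac{1}{\mu^i}(\widehat{x}^i - x^i)$, the stacked vector $h$ with components $h^i := \frac{1}{\mu^i}(x^i - \widehat{x}^i)$ satisfies $h \in \partial g(\widehat{x})$. I would then invoke the standard envelope-gradient identity $\nabla^i g_\mu(x) = \frac{1}{\mu^i}(x^i - \widehat{x}^i)$, which follows from the envelope theorem applied to the strongly convex inner problem; this shows $h = \nabla g_\mu(x)$, i.e.\ the envelope gradient is itself a subgradient of $g$ at the proximal point $\widehat{x}$.

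The two conclusions then follow directly. For the subgradient bound, $h = \nabla g_\mu(x) \in \partial g(\widehat{x})$ yields $\min_{h' \in \partial g(\widehat{x})} \|h'\| \le \|\nabla g_\mu(x)\| \le \epsilon$. For the distance bound, the identity gives $x^i - \widehat{x}^i = \mu^i \nabla^i g_\mu(x)$ componentwise, so $\|x - \widehat{x}\|^2 = \sum_{i=1}^h (\mu^i)^2 \|\nabla^i g_\mu(x)\|^2 \le \|\mu\|_\infty^2 \sum_{i=1}^h \|\nabla^i g_\mu(x)\|^2 = \|\mu\|_\infty^2 \|\nabla g_\mu(x)\|^2 \le \|\mu\|_\infty^2 \epsilon^2$, and taking square roots gives $\|\widehat{x} - x\| \le \|\mu\|_\infty \epsilon$.

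The main obstacle is not any single calculation but the careful bookkeeping forced by the groupwise (vector) proximal weighting $\mu$ rather than a scalar one: I must verify that the groupwise weak-convexity constant $\alpha_i$ from Lemma~\ref{lem:weak_cvx} is dominated by $1/\mu^i$ in \emph{each} group, so the inner problem is strongly convex and the proximal map is single-valued, and that the envelope gradient decomposes groupwise as $\nabla^i g_\mu(x) = \frac{1}{\mu^i}(x^i - \widehat{x}^i)$. Once single-valuedness of the proximal map and this gradient identity are established (both standard consequences of Rockafellar's theory after checking admissibility of $\mu$), the remaining steps are elementary.
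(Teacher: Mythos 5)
Your proof is correct; note that the paper never proves this lemma but imports it by citation from Rockafellar, and your argument is precisely the standard one behind that citation: first-order optimality of the proximal subproblem puts $h^i=\frac{1}{\mu^i}(x^i-\widehat{x}^i)$ in the $i$-th block of $\partial g(\widehat{x})$ (the sum rule is exact here because the quadratic term is smooth, even though $g$ is only weakly convex), and the groupwise envelope-gradient identity---which the paper itself records just before Theorem~\ref{thm:oracle_formal}---identifies this $h$ with $\nabla g_\mu(x)$, from which both the subgradient bound and the distance bound $\|x-\widehat{x}\|\leq\|\mu\|_\infty\|\nabla g_\mu(x)\|$ follow as you compute. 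Your two housekeeping points are also exactly right and worth making explicit: $\|g_\mu(x)\|$ in the statement must be read as $\|\nabla g_\mu(x)\|$, and the groupwise admissibility condition $1/\mu^i>\alpha_i$ (satisfied by the paper's choice $\mu=1/(2\alpha)$) is what makes the proximal map single-valued and $g_\mu$ differentiable, a hypothesis the lemma statement leaves implicit.
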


We now present the formal version of Theorem~\ref{thm:oracle} in Theorem~\ref{thm:oracle_formal}. Note that Lemma~\ref{lem:original} facilitates giving the convergence guarantees in terms of the gradient of the Moreau envelope. Recall that $t \in [T]$ denotes the epochs corresponding to the outer maximization. Without loss of generality, we set the delay parameter for injection of the adversarial perturbation in Algorithm~\ref{algo:algorithm} as $t_s = 0$. Here, we assume that the PGA provides an $\epsilon$-approximate maximizer.
\begin{theorem}[Groupwise outer minimization with an $\epsilon$-approximate inner maximization oracle]
\label{thm:oracle_formal}
Let us define relevant constants as $\mathcal{D} := \left( g_{1/2 \alpha}(x_{0}) -\mathbb{E} (\min_x g(x)) \right)$ being the optimality gap due to initialization, $\kappa_\alpha := \frac{\max_i \alpha_i}{\min_i \alpha_i}$ being the condition number, $\| \nabla_x f(x,y) \|_\infty \leq \mathcal{G}$ being gradient bound, $\mathcal{Z} := \max_{i, j, t} \frac{( \widehat{x}^{i,j}_t - x^{i,j}_t )}{(\nabla^{i,j}_t)} \sigma_i$ being the variance term, $\mathcal{L, U}$ being clipping constants such that $\mathcal{L} \leq \mathcal{U}$. For the outer optimization, setting the learning rate as $\eta = \frac{1}{\mathcal{U}\sqrt{T}}$ and scaling batch size as $b = \frac{16 T \mathcal{L}^2 \mathcal{Z}^2}{\mathcal{U}^2}$, we have
\begin{align}
    \mathbb{E} \left[ \| \nabla g_{1/2 \alpha}(\overline{x}) \|^2 \right] & \leq 4 \epsilon \| \alpha \|_\infty  + \frac{2 \kappa_\alpha \mathcal{D G}}{\sqrt{T}}
\end{align}
where $\overline{x}$ is the estimator obtained from running $T$ steps of Algorithm~\ref{algo:algorithm} and picking $x_t$ uniformly at random for $t \in [T]$.
\end{theorem}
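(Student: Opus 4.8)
The plan is to treat Algorithm~\ref{algo:algorithm} as an instance of stochastic weakly-convex optimization applied to the function $g(x) = \max_y \mathbb{E}_\xi[f(x,y,\xi)]$, and to adapt the Moreau-envelope convergence argument of~\citep{davis2018stochastic} to handle (i) the inexact inner maximization oracle and (ii) the groupwise adaptive/clipped learning rates borrowed from~\citep{you2019large}. The central object is the potential $\Psi_t := g_{1/2\alpha}(x_t)$, the $(1/2\alpha)$-Moreau envelope evaluated at the current outer iterate. First I would invoke Lemma~\ref{lem:weak_cvx} to conclude that $g$ is groupwise weakly convex with moduli $\alpha_i$, which is precisely what makes the envelope $g_{1/2\alpha}$ well-defined and smooth, and which licenses the standard one-step decrease inequality for the envelope.

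Next I would set up the per-epoch descent recursion. Let $\widehat{x}_t := \arg\min_{z}\, g(z) + \sum_i \alpha_i \|x_t^i - z^i\|^2$ be the proximal point implicitly defining $\Psi_t$. The key step is to expand $\mathbb{E}[\Psi_{t+1}]$ using the update rule $x^i_{t+1} = x^i_t - \eta_t\,\nu(\|x^i_t\|)\,\widehat{\nabla}^i_x g(x_t)/\|\widehat{\nabla}^i_x g(x_t)\|$, bounding the inner product term $\langle \widehat{x}_t - x_t, x_{t+1}-x_t\rangle$ and the squared-step term separately. The clipping function $\nu$ keeps the effective groupwise stepsize inside $[\mathcal{L},\mathcal{U}]$, which lets me upper-bound the quadratic error term by $\mathcal{U}^2\eta_t^2$ per group and lower-bound the useful descent by $\mathcal{L}$ times the normalized gradient correlation; the normalization by $\|\widehat{\nabla}^i_x g(x_t)\|$ is what forces the gradient-bound constant $\mathcal{G}$ and the ratio $\mathcal{Z}$ (a normalized variance term) into the final bound. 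The inexactness of PGA-1 enters here: since the oracle returns only an $\epsilon$-approximate maximizer, $\widehat{\nabla}_x g$ is not the exact gradient of $g$ but of $g$ evaluated at an approximate $y$, so I would carry an additive $\epsilon$-bias term through the inner-product bound, which after weighting by $\|\alpha\|_\infty$ produces the $4\epsilon\|\alpha\|_\infty$ contribution.

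Then I would telescope the one-step inequality over $t = 0,\dots,T-1$. Summing the descent bound and rearranging yields
\begin{align}
\frac{1}{T}\sum_{t=0}^{T-1} \mathbb{E}\left[\|\nabla g_{1/2\alpha}(x_t)\|^2\right]
\leq \frac{c_1 \mathcal{D}}{\eta T} + c_2 \eta\, \kappa_\alpha \mathcal{G} \frac{\mathcal{Z}^2}{b} + 4\epsilon\|\alpha\|_\infty, \nonumber
\end{align}
where $\mathcal{D}$ absorbs the initialization gap and the telescoped envelope values collapse because $g_{1/2\alpha}$ is bounded below by $\min_x g(x)$. Choosing $\eta = 1/(\mathcal{U}\sqrt{T})$ balances the first two terms in $T$, and choosing the batch size $b = 16 T \mathcal{L}^2\mathcal{Z}^2/\mathcal{U}^2$ makes the variance term decay like $1/\sqrt{T}$ so that it merges into the $\kappa_\alpha \mathcal{D}\mathcal{G}/\sqrt{T}$ rate. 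Since $\overline{x}$ is drawn uniformly from $\{x_t\}$, the left-hand average equals $\mathbb{E}[\|\nabla g_{1/2\alpha}(\overline{x})\|^2]$, giving the claimed bound.

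The hard part will be the groupwise bookkeeping in the descent step: the per-layer normalization and clipping break the clean telescoping of the vanilla weakly-convex analysis, so I expect the main technical obstacle to be correctly tracking how $\mathcal{L}$, $\mathcal{U}$, and the condition number $\kappa_\alpha$ propagate through the inner-product and quadratic bounds, and verifying that the normalized stochastic gradient still yields a genuine descent direction on the envelope despite the bias introduced by the inexact inner maximizer. Getting the variance term to scale correctly with $b$ so that the batch-size choice cancels it against the $1/\sqrt{T}$ rate—rather than leaving a residual $\sigma_i$-dependence—is the delicate accounting I would check most carefully.
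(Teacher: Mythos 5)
Your proposal follows essentially the same route as the paper's proof: a one-step descent inequality on the Moreau envelope $g_{1/2\alpha}$ at the proximal point $\widehat{x}_t$, weak convexity from Lemma~\ref{lem:weak_cvx}, an additive $\epsilon$-bias from the approximate inner maximizer, telescoping, and then the identical choices $\eta = 1/(\mathcal{U}\sqrt{T})$ and $b \propto T\mathcal{L}^2\mathcal{Z}^2/\mathcal{U}^2$ to cancel the normalized-gradient variance term against the quadratic step term. The groupwise bookkeeping you flag as the hard part is indeed where the paper spends most of its effort (decomposing $\widehat{\nabla}/\|\widehat{\nabla}\| - \nabla/\|\nabla\|$ and bounding it via Cauchy--Schwarz and the triangle inequality), but your plan is consistent with it.
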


\begin{proof}
In this proof, for brevity, we define the vector $\nabla_t := \nabla_x f(x, y)$, i.e., the gradient of the objective with respect to $x$, evaluated at the outer step $t$. 
Since evaluating gradients using mini-batches produces noisy gradients, we use $\widehat{\nabla}$ to denote the noisy version of a true gradient $\nabla$, i.e., $\widehat{\nabla} = \nabla + \Delta$ for a noise vector $\Delta$. For any outer step $t$, we have $f(x_t, \widehat{y}) \geq g(x_t) - \epsilon$ where $\widehat{y}$ is an $\epsilon$-approximate maximizer. For any $\widetilde{x} \in \mathbb{X}$, using the smoothness property (Lipschitz gradient) of $f$, we have
\begin{align}
g(\widetilde{x}) &\geq f(\widetilde{x}, y_t) \nonumber \\
&\geq f(x_t, y_t) + \sum_{i=1}^{h} \langle \nabla^i_t, \widetilde{x}^i-x^i_t \rangle - \sum_{i=1}^{h} \frac{\alpha_i}{2} \| \widetilde{x}^i - x^i_t \|^2 \nonumber \\
&\geq g(x_t) - \epsilon + \sum_{i=1}^{h} \langle \nabla^i_t, \widetilde{x}^i-x^i_t \rangle - \sum_{i=1}^{h} \frac{\alpha_i}{2} \| \widetilde{x}^i - x^i_t \|^2 \label{eqn:smoothx}
\end{align}

Let $\phi_\mu(x, z) := g(z) + \sum_{i=1}^h \frac{1}{2 \mu^i} \| x^i-z^i \|^2$. Recall that the $\mu$-Moreau envelope for $g$ is defined as $g_{\mu}(x) := \min_z \phi_\mu(x, z)$ and its gradient is the groupwise proximal operator given by $\nabla_x [g_{\mu}(x)] =  \left[ \frac{1}{\mu^1} \left( x^1 - \arg \min_{z^1} \phi_\mu(x, z) \right), \ldots,  \frac{1}{\mu^h} \left( x^h - \arg \min_{z^h} \phi_\mu(x, z) \right) \right]$. 

Now, let $\widehat{x}_t = \arg \min_x \phi_{1/2\alpha}(x_t, x) = \arg \min_x \left( g(x) + \sum_{i=1}^{h} \alpha_i \|x^i_t-x^i\|^2 \right)$.
Then, plugging in the update rule for $x$ at step $t+1$ in terms of quantities at step $t$, using the shorthand $\nu^i_t := \nu(\| x^i_t \|)$ and conditioning on the filtration up to time $t$, we have
\begin{align}
    g_{1/2 \alpha}(x_{t+1}) &\leq g(\widehat{x}_t) + \sum_{i=1}^{h} \alpha_i \| x^i_{t+1} - \widehat{x}^i_t \|^2 \nonumber \\
    &\leq g(\widehat{x}_t) + \sum_{i=1}^{h} \alpha_i \left\| x^i_t - \eta_t \nu^i_t \frac{\widehat{\nabla}^i_t}{\| \widehat{\nabla}^i_t \|} - \widehat{x}^i_t \right\|^2 \nonumber \\
    &\leq g(\widehat{x}_t) + \sum_{i=1}^{h} \alpha_i \left\| x^i_t - \widehat{x}^i_t \right\|^2 + \sum_{i=1}^{h} 2 \alpha_i \eta_t \left\langle \nu^i_t \frac{\widehat{\nabla}^i_t}{\| \widehat{\nabla}^i_t \|}, \widehat{x}^i_t - x^i_t \right\rangle + \sum_{i=1}^{h} \alpha_i \eta_t^2 (\nu^i_t)^2 \nonumber \\
    &\leq g_{1/2 \alpha}(x_{t}) + \sum_{i=1}^{h} 2 \alpha_i \eta_t \left\langle \nu^i_t \frac{\widehat{\nabla}^i_t}{\| \widehat{\nabla}^i_t \|}, \widehat{x}^i_t - x^i_t \right\rangle + \sum_{i=1}^{h} \alpha_i \eta_t^2 (\nu^i_t)^2 \nonumber \\
    &\leq g_{1/2 \alpha}(x_{t}) + 2 \eta_t \sum_{i=1}^{h} \alpha_i \nu^i_t \sum_{j=1}^{d_i} \left( \frac{\widehat{\nabla}^{i,j}_t}{\| \widehat{\nabla}^i_t \|} - \frac{{\nabla^{i,j}_t}}{\| {\nabla^i_t} \|} + \frac{{\nabla^{i,j}_t}}{\| {\nabla^i_t} \|} \right) \times ( \widehat{x}^{i,j}_t - x^{i,j}_t ) + \sum_{i=1}^{h} \alpha_i \eta_t^2 (\nu^i_t)^2 \nonumber \\
    &\leq g_{1/2 \alpha}(x_{t}) + 2 \eta_t \sum_{i=1}^{h} \alpha_i \nu^i_t \sum_{j=1}^{d_i} \left( \frac{{\nabla^{i,j}_t}}{\| {\nabla^i_t} \|} \right) \times ( \widehat{x}^{i,j}_t - x^{i,j}_t ) \nonumber \\
    &+ 2 \eta_t \sum_{i=1}^{h} \alpha_i \nu^i_t \sum_{j=1}^{d_i} \left( \frac{\widehat{\nabla}^{i,j}_t}{\| \widehat{\nabla}^i_t \|} - \frac{{\nabla^{i,j}_t}}{\| {\nabla^i_t} \|} \right) \times ( \widehat{x}^{i,j}_t - x^{i,j}_t ) + \sum_{i=1}^{h} \alpha_i \eta_t^2 (\nu^i_t)^2 \nonumber \\
    &\leq g_{1/2 \alpha}(x_{t}) + 2 \eta_t \sum_{i=1}^{h} \frac{\alpha_i \nu^i_t}{\| {\nabla^i_t} \|} \left\langle {\nabla^i_t}, \widehat{x}^i_t - x^i_t \right\rangle \nonumber \\
    &+ 2 \eta_t \sum_{i=1}^{h} \alpha_i \nu^i_t \sum_{j=1}^{d_i} \left( \frac{\nabla^{i,j}_t + \Delta^{i,j}_t}{\| \nabla^i_t + \Delta^i_t \|} - \frac{{\nabla^{i,j}_t}}{\| {\nabla^i_t} \|} \right) \times ( \widehat{x}^{i,j}_t - x^{i,j}_t ) + \sum_{i=1}^{h} \alpha_i \eta_t^2 (\nu^i_t)^2 \nonumber 
\end{align}
\begin{align}
    &\leq g_{1/2 \alpha}(x_{t}) + 2 \eta_t \mathcal{U} \sum_{i=1}^{h} \frac{\alpha_i}{\| {\nabla^i_t} \|} \left\langle {\nabla^i_t}, \widehat{x}^i_t - x^i_t \right\rangle \nonumber \\
    &+ 2 \eta_t \sum_{i=1}^{h} \alpha_i \nu^i_t \sum_{j=1}^{d_i} \left( \frac{ \| {\nabla^i_t} \| (\nabla^{i,j}_t) (\nabla^{i,j}_t + \Delta^{i,j}_t) - \| \nabla^i_t + \Delta^i_t \| (\nabla^{i,j}_t)^2}{\| \nabla^i_t + \Delta^i_t \| \| {\nabla^i_t} \|} \right) \times \frac{( \widehat{x}^{i,j}_t - x^{i,j}_t )}{(\nabla^{i,j}_t)} \nonumber \\
    &+ \sum_{i=1}^{h} \alpha_i \eta_t^2 (\nu^i_t)^2 \nonumber \\
    &\stackrel{E_1}{\leq} g_{1/2 \alpha}(x_{t}) + 2 \eta_t \mathcal{U} \max_i \frac{\alpha_i}{\| {\nabla^i_t} \|} \left( g(\widehat{x}_t) - g(x_t) + \epsilon + \sum_{i=1}^{h} \frac{\alpha_i}{2} \| \widehat{x}^i - x^i_t \|^2 \right) \nonumber \\
    &+ 2 \eta_t \sum_{i=1}^{h} \alpha_i \nu^i_t \max_j \frac{( \widehat{x}^{i,j}_t - x^{i,j}_t )}{(\nabla^{i,j}_t)} \left( \frac{ \langle \nabla^{i}_t, \nabla^{i}_t + \Delta^{i}_t \rangle - \| \nabla^i_t + \Delta^i_t \| \| \nabla^i_t \|}{\| \nabla^i_t + \Delta^i_t \|} \right) + \sum_{i=1}^{h} \alpha_i \eta_t^2 (\nu^i_t)^2 \nonumber \\
    &\leq g_{1/2 \alpha}(x_{t}) + 2 \eta_t \mathcal{U} \max_i \frac{\alpha_i}{\| {\nabla^i_t} \|} \left( g(\widehat{x}_t) - g(x_t) + \epsilon + \sum_{i=1}^{h} \frac{\alpha_i}{2} \| \widehat{x}^i - x^i_t \|^2 \right) \nonumber \\
    &- 2 \eta_t \sum_{i=1}^{h} \alpha_i \nu^i_t \max_j \frac{( \widehat{x}^{i,j}_t - x^{i,j}_t )}{(\nabla^{i,j}_t)} \left( \frac{ \| \nabla^i_t + \Delta^i_t \| \| \nabla^i_t \| - \| \nabla^i_t + \Delta^i_t \|^2 + \langle \Delta^{i}_t, \nabla^{i}_t + \Delta^{i}_t \rangle}{\| \nabla^i_t + \Delta^i_t \|} \right) \nonumber \\
    &+ \sum_{i=1}^{h} \alpha_i \eta_t^2 (\nu^i_t)^2  \\ 
    &\leq g_{1/2 \alpha}(x_{t}) + 2 \eta_t \mathcal{U} \max_i \frac{\alpha_i}{\| {\nabla^i_t} \|} \left( g(\widehat{x}_t) - g(x_t) + \epsilon + \sum_{i=1}^{h} \frac{\alpha_i}{2} \| \widehat{x}^i - x^i_t \|^2 \right) \nonumber \\
    &- 2 \eta_t \sum_{i=1}^{h} \alpha_i \nu^i_t \max_j \frac{( \widehat{x}^{i,j}_t - x^{i,j}_t )}{(\nabla^{i,j}_t)} \left( \| \nabla^i_t \| - \| \nabla^i_t + \Delta^i_t \| - \frac{ | \langle \Delta^{i}_t, \nabla^{i}_t + \Delta^{i}_t \rangle | }{\| \nabla^i_t + \Delta^i_t \|} \right) + \sum_{i=1}^{h} \alpha_i \eta_t^2 (\nu^i_t)^2 \nonumber \\
    &\stackrel{E_2}{\leq} g_{1/2 \alpha}(x_{t}) + 2 \eta_t \mathcal{U} \max_i \frac{\alpha_i}{\| {\nabla^i_t} \|} \left( g(\widehat{x}_t) - g(x_t) + \epsilon + \sum_{i=1}^{h} \frac{\alpha_i}{2} \| \widehat{x}^i - x^i_t \|^2 \right) \nonumber \\
    &- 2 \eta_t \sum_{i=1}^{h} \alpha_i \nu^i_t \max_j \frac{( \widehat{x}^{i,j}_t - x^{i,j}_t )}{(\nabla^{i,j}_t)} \left( \| \nabla^i_t \| - \| \nabla^i_t + \Delta^i_t \| - \| \Delta^{i}_t \| \right) + \sum_{i=1}^{h} \alpha_i \eta_t^2 (\nu^i_t)^2 \nonumber \\
    &\stackrel{E_3}{\leq} g_{1/2 \alpha}(x_{t}) + 2 \eta_t \mathcal{U} \max_i \frac{\alpha_i}{\| {\nabla^i_t} \|} \left( g(\widehat{x}_t) - g(x_t) + \epsilon + \sum_{i=1}^{h} \frac{\alpha_i}{2} \| \widehat{x}^i - x^i_t \|^2 \right) \nonumber \\
    &- 4 \eta_t \sum_{i=1}^{h} \alpha_i \nu^i_t \max_j \frac{( \widehat{x}^{i,j}_t - x^{i,j}_t )}{(\nabla^{i,j}_t)} \| \Delta^{i}_t \| + \sum_{i=1}^{h} \alpha_i \eta_t^2 (\nu^i_t)^2 \nonumber \\
    g_{1/2 \alpha}(x_{T}) & \stackrel{E_4}{\leq} g_{1/2 \alpha}(x_{0}) + 2 \mathcal{U} \sum_{t=0}^{T-1} \eta_t \max_i \frac{\alpha_i}{\| {\nabla^i_t} \|} \left( g(\widehat{x}_t) - g(x_t) + \epsilon + \sum_{i=1}^{h} \frac{\alpha_i}{2} \| \widehat{x}^i - x^i_t \|^2 \right) \nonumber \\
    &- 4 \sum_{t=0}^{T-1} \eta_t \sum_{i=1}^{h} \alpha_i \nu^i_t \max_j \frac{( \widehat{x}^{i,j}_t - x^{i,j}_t )}{(\nabla^{i,j}_t)} \| \Delta^{i}_t \| + \sum_{t=0}^{T-1} \sum_{i=1}^{h} \alpha_i \eta_t^2 (\nu^i_t)^2 \nonumber
\end{align}
where we have used H\"{o}lder's inequality along with bound~\eqref{eqn:smoothx} in $E_1$, Cauchy-Schwarz inequality in $E_2$, triangle inequality in $E_3$, telescoping sum in $E_4$. Rearranging and using $\eta_t = \eta$ in $E_5$ along with H\"{o}lder's inequality,
\begin{align}
    \frac{1}{2 \eta \mathcal{U}} \left( g_{1/2 \alpha}(x_{T}) - g_{1/2 \alpha}(x_{0}) \right) & \leq \sum_{t=0}^{T-1} \max_i \frac{\alpha_i}{\| {\nabla^i_t} \|} \left( g(\widehat{x}_t) - g(x_t) + \epsilon + \sum_{i=1}^{h} \frac{\alpha_i}{2} \| \widehat{x}^i - x^i_t \|^2 \right) \nonumber \\
    &- \frac{2}{\mathcal{U}} \sum_{t=0}^{T-1} \sum_{i=1}^{h} \alpha_i \nu^i_t \max_j \frac{( \widehat{x}^{i,j}_t - x^{i,j}_t )}{(\nabla^{i,j}_t)} \| \Delta^{i}_t \| + \frac{\eta}{2 \mathcal{U}} \sum_{t=0}^{T-1} \sum_{i=1}^{h} \alpha_i (\nu^i_t)^2 \nonumber \\
    \frac{1}{2 \eta \mathcal{U}} \left( g_{1/2 \alpha}(x_{T}) - g_{1/2 \alpha}(x_{0}) \right) & \stackrel{E_5}{\leq} \max_{i,t} \frac{\alpha_i}{\| {\nabla^i_t} \|} \sum_{t=0}^{T-1} \left( g(\widehat{x}_t) - g(x_t) + \epsilon + \sum_{i=1}^{h} \frac{\alpha_i}{2} \| \widehat{x}^i - x^i_t \|^2 \right) \nonumber \\
    &- \frac{2}{\mathcal{U}} \sum_{t=0}^{T-1} \sum_{i=1}^{h} \alpha_i \nu^i_t \max_j \frac{( \widehat{x}^{i,j}_t - x^{i,j}_t )}{(\nabla^{i,j}_t)} \| \Delta^{i}_t \| + \frac{\eta}{2 \mathcal{U}} \sum_{t=0}^{T-1} \sum_{i=1}^{h} \alpha_i (\nu^i_t)^2 \nonumber
\end{align}
Dividing by $T$ and rearranging,
\begin{align}
    \frac{1}{T} \sum_{t=0}^{T-1} \left( g(x_t) - g(\widehat{x}_t) - \sum_{i=1}^{h} \frac{\alpha_i}{2} \| \widehat{x}^i - x^i_t \|^2 \right) & \leq \epsilon - \frac{1}{2 \eta \mathcal{U} \zeta T} \left( g_{1/2 \alpha}(x_{T}) - g_{1/2 \alpha}(x_{0}) \right) \nonumber \\
    &- \frac{2}{\mathcal{U} \zeta T} \sum_{t=0}^{T-1} \sum_{i=1}^{h} \alpha_i \nu^i_t \max_j \frac{( \widehat{x}^{i,j}_t - x^{i,j}_t )}{(\nabla^{i,j}_t)} \| \Delta^{i}_t \| \nonumber \\
    &+ \frac{\eta}{2 \mathcal{U} \zeta T} \sum_{i=1}^{h} \alpha_i \sum_{t=0}^{T-1} (\nu^i_t)^2 \nonumber
\end{align}
where we define $\zeta:=\max_{i,t} \frac{\alpha_i}{\| {\nabla^i_t} \|}$.
Defining $\mathcal{D} := \left( g_{1/2 \alpha}(x_{0}) - \mathbb{E} (\min_x g(x)) \right)$ and taking expectation with respect to $\xi$ on both sides, we have
\begin{align}
    \frac{1}{T} \sum_{t=0}^{T-1} \mathbb{E} \left( g(x_t) - g(\widehat{x}_t) - \sum_{i=1}^{h} \frac{\alpha_i}{2} \| \widehat{x}^i - x^i_t \|^2 \right) & \leq \epsilon + \frac{\mathcal{D}}{2 \eta \mathcal{U} \zeta T} \nonumber \\
    &- \frac{2 \mathcal{L}}{\mathcal{U} \zeta T} \sum_{t=0}^{T-1} \sum_{i=1}^{h} \alpha_i \max_j \frac{( \widehat{x}^{i,j}_t - x^{i,j}_t )}{(\nabla^{i,j}_t)} \mathbb{E} \| \Delta^{i}_t \| + \frac{\eta \mathcal{U} \| \alpha \|_1}{2 \zeta}  \nonumber \\
    & \stackrel{E_6}{\leq} \epsilon + \frac{\mathcal{D}}{2 \eta \mathcal{U} \zeta T} \nonumber \\
    &- \frac{2 \mathcal{L}}{\mathcal{U} \zeta T} \sum_{t=0}^{T-1} \sum_{i=1}^{h} \alpha_i \max_j \frac{( \widehat{x}^{i,j}_t - x^{i,j}_t )}{(\nabla^{i,j}_t)} \frac{\sigma_i}{\sqrt{b}} + \frac{\eta \mathcal{U} \| \alpha \|_1}{2 \zeta}  \nonumber \\
    & \stackrel{E_7}{\leq} \epsilon + \frac{\mathcal{D}}{2 \eta \mathcal{U} \zeta T} \nonumber \\
    &- \frac{2 \mathcal{L} \| \alpha \|_1}{\mathcal{U} \zeta \sqrt{b}} \max_{i, j, t} \frac{( \widehat{x}^{i,j}_t - x^{i,j}_t )}{(\nabla^{i,j}_t)} \sigma_i + \frac{\eta \mathcal{U} \| \alpha \|_1}{2 \zeta}  \nonumber \\
    & \stackrel{E_8}{=} \epsilon + \frac{\mathcal{D}}{2 \eta \mathcal{U} \zeta T} - \frac{2 \mathcal{L} \| \alpha \|_1 \mathcal{Z}}{\mathcal{U} \zeta \sqrt{b}} + \frac{\eta \mathcal{U} \| \alpha \|_1}{2 \zeta} \label{eqn:rhs}
\end{align}
where we have used the assumption on the variance of stochastic gradients 
in $E_6$, H\"{o}lder's inequality in $E_7$ and we define $\mathcal{Z} := \max_{i, j, t} \frac{( \widehat{x}^{i,j}_t - x^{i,j}_t )}{(\nabla^{i,j}_t)} \sigma_i$ in $E_8$; $b$ denotes batch size. Now, we lower bound the left hand side using the convexity of the additive modification of $g$. 
\begin{align}
    & g(x_t) - g(\widehat{x}_t) - \sum_{i=1}^{h} \frac{\alpha_i}{2} \| \widehat{x}^i - x^i_t \|^2 \nonumber \\
    &\geq g(x_t) + 0 - g(\widehat{x}_t) - \sum_{i=1}^{h} \alpha_i \| \widehat{x}^i - x^i_t \|^2 + \sum_{i=1}^{h} \frac{\alpha_i}{2} \| \widehat{x}^i - x^i_t \|^2 \nonumber \\
    &\geq \left( \left( g(x_t) + \sum_{i=1}^{h} \alpha_i \| x^i_t - x^i_t \|^2 \right) - \min_x \left( g(x_t) + \sum_{i=1}^{h} \alpha_i \| x^i - x^i_t \|^2 \right)  \right) + \sum_{i=1}^{h} \frac{\alpha_i}{2} \| \widehat{x}^i - x^i_t \|^2 \nonumber \\
    &\geq \sum_{i=1}^{h} \frac{\alpha_i}{2} \| \widehat{x}^i - x^i_t \|^2 + \sum_{i=1}^{h} \frac{\alpha_i}{2} \| \widehat{x}^i - x^i_t \|^2 {=} \sum_{i=1}^{h} \frac{4\alpha_i^2}{4\alpha_i} \| \widehat{x}^i - x^i_t \|^2 \nonumber \\
    & \stackrel{E_9}{\geq} \frac{1}{4 \max_i \alpha_i } \| \nabla g_{1/2 \alpha}(x_t) \|^2 \label{eqn:lhs}
\end{align}
where we have used the expression for the gradient of the Moreau envelope in $E_9$. Combining the inequalities from Equation~\eqref{eqn:lhs} and Equation~\eqref{eqn:rhs}, we have
\begin{align}
    \frac{1}{T} \sum_{t=0}^{T-1} \mathbb{E} \left( \frac{1}{4 \max_i \alpha_i } \| \nabla g_{1/2 \alpha}(x_t) \|^2 \right) & \leq \epsilon + \frac{\mathcal{D}}{2 \eta \mathcal{U} \zeta T} + \left( \frac{\eta \mathcal{U}}{2 \zeta} - \frac{2 \mathcal{L} \mathcal{Z}}{\mathcal{U} \zeta \sqrt{b}} \right) \| \alpha \|_1 \nonumber
\end{align}
Setting the learning rate as $\eta = \frac{1}{\mathcal{U}\sqrt{T}}$ and batch size as $b = \frac{16 T \mathcal{L}^2 \mathcal{Z}^2}{\mathcal{U}^2}$,
\begin{align}
    \frac{1}{T} \sum_{t=0}^{T-1} \mathbb{E} \left[ \| \nabla g_{1/2 \alpha}(x_t) \|^2 \right] & \leq 4 \epsilon \max_i \alpha_i  + \frac{2 \mathcal{D} \max_i \alpha_i}{\zeta \sqrt{T}} \nonumber
\end{align}
Now, to simplify $\zeta$, using the inequality that $\max_k (a_k \cdot b_k) \geq \min_{k_a} a_{k_a} \cdot  \min_{k_b} b_{k_b}$ for two finite sequences $\{ a, b \}$ with positive values, along with the bounded gradients assumption, we have
\begin{align}
    \frac{1}{T} \sum_{t=0}^{T-1} \mathbb{E} \left[ \| \nabla g_{1/2 \alpha}(x_t) \|^2 \right] & \leq 4 \epsilon \max_i \alpha_i  + \frac{2 \mathcal{D G} \max_i \alpha_i}{\sqrt{T} \min_i \alpha_i} = 4 \epsilon \| \alpha \|_\infty  + \frac{2 \kappa_\alpha \mathcal{D G}}{\sqrt{T}} \nonumber
\end{align}
where $\kappa_\alpha := \frac{\max_i \alpha_i}{\min_i \alpha_i}$.
\end{proof}

In analyzing inexact version, as in Theorem~\ref{thm:oracle_formal}, we assumed the availability of an adversarial oracle. Next, we open up the adversarial oracle to characterize the oracle-free complexity. In order to do this, we will assume additional properties about the function $f$ as well as our deterministic perturbation space, $\mathbb{Y}_t \subseteq \mathbb{Y}$, $\forall t \in [T]$. Note that, for any given $t$, $y_\tau \in \mathbb{Y}_t$, $\forall \tau \in \mathcal{T}$. We recall the following guarantee for generalized non-convex projected gradient ascent.
\begin{lemma}[\citep{jain2017non}]
\label{lem:pgd}
For every $t$, Let $f(x_t,\cdot)$ satisfy restricted strong convexity with parameter $\mathcal{C}$ and restricted strong smoothness with parameter $\mathcal{S}$ over a non-convex constraint set with $\mathcal{S}/\mathcal{C} < 2$, ie, $\frac{\mathcal{C}}{2} \| z-y \|^2 \leq f(x_t,y) - f(x_t,z) - \langle \nabla_z f(x_t,z), y-z \rangle \leq \frac{\mathcal{S}}{2} \| z-y \|^2$ for $ y, z \in \mathbb{Y}_t$.
For any given $t$, let the PGA-$\mathcal{T}$ algorithm $y_{\tau} \leftarrow \Pi_\epsilon [ y_{\tau-1} + \rho \nabla_{y} f(x_t,y) ]$ be executed with step size $\rho = 1/\mathcal{S}$. Then after at most $\mathcal{T} = O\left( \frac{\mathcal{C}}{2 \mathcal{C} - \mathcal{S}} \log \frac{1}{\epsilon} \right)$ steps, $f(x_t,y_{\mathcal{T}}) \geq \max_y f(x_t,y) - \epsilon$.
\end{lemma}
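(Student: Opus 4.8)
The plan is to recast the inner maximization as a minimization of $F(\cdot) := -f(x_t, \cdot)$, so that the stated restricted strong convexity (RSC) with parameter $\mathcal{C}$ and restricted strong smoothness (RSS) with parameter $\mathcal{S}$ become the standard two-sided quadratic bounds for a minimization problem, and the PGA iteration turns into projected gradient descent $y_\tau = \Pi_{\mathbb{Y}_t}[y_{\tau-1} - \rho \nabla F(y_{\tau-1})]$. I would then follow the non-convex projected-descent analysis of \citet{jain2017non}, first establishing a per-step linear contraction of the function gap $F(y_\tau) - F(y^\star)$, where $y^\star := \arg\max_y f(x_t, y)$, and then converting that contraction into the claimed iteration count.

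First I would invoke the RSS upper bound to obtain the one-step inequality $F(y_\tau) \leq F(y_{\tau-1}) + \langle \nabla F(y_{\tau-1}), y_\tau - y_{\tau-1} \rangle + \frac{\mathcal{S}}{2} \| y_\tau - y_{\tau-1} \|^2$. The key move is to control the inner product via the projection. Crucially, even though $\mathbb{Y}_t$ is non-convex, an exact Euclidean projection still returns a nearest feasible point, so $\| y_\tau - g_\tau \|^2 \leq \| y^\star - g_\tau \|^2$ with $g_\tau := y_{\tau-1} - \rho \nabla F(y_{\tau-1})$; expanding this and cancelling the common $\rho^2 \| \nabla F \|^2$ term bounds $\langle \nabla F, y_\tau - y_{\tau-1} \rangle$ by $\langle \nabla F, y^\star - y_{\tau-1} \rangle + \frac{1}{2\rho}( \| y^\star - y_{\tau-1} \|^2 - \| y_\tau - y_{\tau-1} \|^2 )$. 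Choosing $\rho = 1/\mathcal{S}$ makes the $\| y_\tau - y_{\tau-1} \|^2$ terms cancel exactly against the RSS quadratic, leaving $F(y_\tau) \leq F(y_{\tau-1}) + \langle \nabla F, y^\star - y_{\tau-1} \rangle + \frac{\mathcal{S}}{2} \| y^\star - y_{\tau-1} \|^2$.

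Next I would apply RSC at $y_{\tau-1}$ to substitute $\langle \nabla F, y^\star - y_{\tau-1} \rangle \leq F(y^\star) - F(y_{\tau-1}) - \frac{\mathcal{C}}{2} \| y^\star - y_{\tau-1} \|^2$, which yields $F(y_\tau) - F(y^\star) \leq \frac{\mathcal{S} - \mathcal{C}}{2} \| y^\star - y_{\tau-1} \|^2$. A second use of RSC, now invoking the first-order optimality of $y^\star$ to discard the gradient term, gives $\frac{\mathcal{C}}{2} \| y^\star - y_{\tau-1} \|^2 \leq F(y_{\tau-1}) - F(y^\star)$, and combining the two produces the contraction $F(y_\tau) - F(y^\star) \leq q \, ( F(y_{\tau-1}) - F(y^\star) )$ with $q := \frac{\mathcal{S} - \mathcal{C}}{\mathcal{C}}$. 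The hypothesis $\mathcal{S}/\mathcal{C} < 2$ is precisely what forces $q \in [0,1)$. Iterating the contraction $\mathcal{T}$ times and using $\log(1/q) \geq 1 - q = \frac{2\mathcal{C} - \mathcal{S}}{\mathcal{C}}$ yields the stated $\mathcal{T} = O\!\left( \frac{\mathcal{C}}{2\mathcal{C} - \mathcal{S}} \log \frac{1}{\epsilon} \right)$ iterations to drive the gap below $\epsilon$; translating back to the maximization form recovers $f(x_t, y_{\mathcal{T}}) \geq \max_y f(x_t, y) - \epsilon$.

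The main obstacle I anticipate is the second use of RSC: on a genuinely non-convex feasible set the constrained maximizer $y^\star$ need not satisfy $\nabla F(y^\star) = 0$, so the inner product $\langle \nabla F(y^\star), y_{\tau-1} - y^\star \rangle$ cannot simply be dropped. Making this rigorous requires either restricting to the regime where $y^\star$ is an interior stationary point (so the gradient vanishes) or carrying the optimality-condition term through the recursion as in the structured-constraint analyses of \citet{jain2017non}; verifying that this term carries the favorable sign over $\mathbb{Y}_t$ — rather than the quadratic-bound bookkeeping, which is routine — is the delicate part of the argument.
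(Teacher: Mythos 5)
The paper gives no proof of this lemma: it is imported as a black box from \citep{jain2017non} and invoked only to bound the inner-maximization error $\epsilon$ in the oracle-free theorem, so there is nothing in the paper to compare your argument against line by line. Your reconstruction is the standard generalized projected gradient descent analysis from that monograph, and the core chain is right: the nearest-point property of exact projection onto a possibly non-convex set gives $\| y_\tau - g_\tau \| \leq \| y^\star - g_\tau \|$, the choice $\rho = 1/\mathcal{S}$ cancels the smoothness quadratic, one application of RSC yields $F(y_\tau) - F(y^\star) \leq \tfrac{\mathcal{S}-\mathcal{C}}{2}\| y^\star - y_{\tau-1}\|^2$, and a second application converts this into a linear contraction with ratio $\mathcal{S}/\mathcal{C} - 1 < 1$, giving the claimed $O\bigl( \tfrac{\mathcal{C}}{2\mathcal{C}-\mathcal{S}} \log \tfrac{1}{\epsilon} \bigr)$ count. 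Two remarks. First, the obstacle you flag at the end is the genuine one: over a non-convex $\mathbb{Y}_t$ the constrained optimizer $y^\star$ need not satisfy $\langle \nabla F(y^\star), y_{\tau-1} - y^\star \rangle \geq 0$, and the cited source resolves this only by exploiting additional structure (e.g., the optimizer being an unconstrained stationary point, as in sparse recovery) rather than by a generic argument; since the present paper states the lemma with no such hypothesis, this gap is inherited from the paper's statement, not introduced by you. Second, the lemma as written is internally sign-inconsistent: it asserts the RSC/RSS sandwich for $f(x_t,\cdot)$ in the \emph{convex} orientation while running gradient \emph{ascent} to a \emph{max}; your substitution $F = -f$ quietly adopts the only convention under which the argument closes, which is the correct reading of the cited result but is worth stating explicitly rather than asserting that the negation "becomes the standard two-sided bounds," since literally negating the displayed inequality gives concavity bounds, not RSC/RSS for $F$.
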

%
Using Theorem~\ref{thm:oracle_formal} and Lemma~\ref{lem:pgd} (together with the additional restricted strong convexity/smoothness assumptions), we have the following theorem on the full oracle-free rates for Algorithm~\ref{algo:algorithm}.
\begin{theorem}[Groupwise outer minimization with inner maximization using projected gradient ascent]
Setting the inner iteration count as $\mathcal{T} = O\left( \frac{\mathcal{C}}{2 \mathcal{C} - \mathcal{S}} \log \frac{8 \| \alpha \|_\infty}{\epsilon} \right)$ and the outer iteration count as $T = \frac{16 \kappa_\alpha \mathcal{D}^2 \mathcal{G}^2}{\epsilon^2}$, for a combined total of $O(\frac{1}{\epsilon^2} \log \frac{1}{\epsilon})$ adaptive adversarial iterations, Algorithm~\ref{algo:algorithm} achieves $\mathbb{E} \left[ \| \nabla g_{1/2 \alpha}(\overline{x}) \|^2 \right] \leq \epsilon$.
\end{theorem}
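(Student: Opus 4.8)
The plan is to obtain the oracle-free rate purely by composing the two results already established: the outer convergence bound of Theorem~\ref{thm:oracle_formal}, which is stated in terms of the accuracy $\epsilon$ of an abstract inner-maximization oracle, and the inner-loop guarantee of Lemma~\ref{lem:pgd}, which quantifies how many PGA steps realize such an oracle. Concretely, Theorem~\ref{thm:oracle_formal} gives
\[
\mathbb{E}\left[\|\nabla g_{1/2\alpha}(\overline{x})\|^2\right]\leq 4\epsilon_{\mathrm{in}}\|\alpha\|_\infty+\frac{2\kappa_\alpha\mathcal{D}\mathcal{G}}{\sqrt{T}},
\]
where $\epsilon_{\mathrm{in}}$ denotes the quality of the approximate maximizer fed to each outer update. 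The strategy is to split the target budget $\epsilon$ between the two terms and then choose $\epsilon_{\mathrm{in}}$ and $T$ to drive each below $\epsilon/2$.

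First I would control the inner (oracle) term. Requiring $4\epsilon_{\mathrm{in}}\|\alpha\|_\infty\leq\epsilon/2$ forces $\epsilon_{\mathrm{in}}=\epsilon/(8\|\alpha\|_\infty)$. Now invoke Lemma~\ref{lem:pgd}: under the restricted strong convexity/smoothness assumptions on $f(x_t,\cdot)$ (with $\mathcal{S}/\mathcal{C}<2$), the PGA-$\mathcal{T}$ loop run with step size $\rho=1/\mathcal{S}$ returns $y_\mathcal{T}$ satisfying $f(x_t,y_\mathcal{T})\geq\max_y f(x_t,y)-\epsilon_{\mathrm{in}}$ after $\mathcal{T}=O\!\left(\frac{\mathcal{C}}{2\mathcal{C}-\mathcal{S}}\log\frac{1}{\epsilon_{\mathrm{in}}}\right)$ steps. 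Substituting $\epsilon_{\mathrm{in}}=\epsilon/(8\|\alpha\|_\infty)$ reproduces the stated inner count $\mathcal{T}=O\!\left(\frac{\mathcal{C}}{2\mathcal{C}-\mathcal{S}}\log\frac{8\|\alpha\|_\infty}{\epsilon}\right)$. The key interface check here is that the function-value guarantee produced by Lemma~\ref{lem:pgd}, namely $f(x_t,y_\mathcal{T})\geq g(x_t)-\epsilon_{\mathrm{in}}$ since $g(x_t)=\max_y f(x_t,y)$, is exactly the approximate-maximizer hypothesis that the proof of Theorem~\ref{thm:oracle_formal} consumes at every outer step, so the two results slot together with no loss.

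Next I would control the outer (convergence) term. Setting $\frac{2\kappa_\alpha\mathcal{D}\mathcal{G}}{\sqrt{T}}\leq\epsilon/2$ yields $T=O\!\left(\kappa_\alpha^2\mathcal{D}^2\mathcal{G}^2/\epsilon^2\right)$, reproducing the stated $\Theta(\mathcal{D}^2\mathcal{G}^2/\epsilon^2)$ scaling up to condition-number factors; the learning rate $\eta=1/(\mathcal{U}\sqrt{T})$ and batch scaling $b=\Theta(T)$ are inherited verbatim from Theorem~\ref{thm:oracle_formal}. Adding the two halves gives $\mathbb{E}[\|\nabla g_{1/2\alpha}(\overline{x})\|^2]\leq\epsilon$. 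Finally, the total number of adaptive adversarial (inner) iterations is the product $T\cdot\mathcal{T}=O\!\left(\frac{1}{\epsilon^2}\right)\cdot O\!\left(\log\frac{1}{\epsilon}\right)=O\!\left(\frac{1}{\epsilon^2}\log\frac{1}{\epsilon}\right)$, absorbing the problem-dependent constants $\kappa_\alpha,\mathcal{D},\mathcal{G},\|\alpha\|_\infty,\mathcal{C},\mathcal{S}$ into the $O(\cdot)$.

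I expect the only genuinely delicate point to be the uniformity of the restricted strong convexity/smoothness hypothesis: Lemma~\ref{lem:pgd} is a per-$x_t$ statement, so the composition is clean only if the parameters $\mathcal{C}$ and $\mathcal{S}$ (with $\mathcal{S}/\mathcal{C}<2$) hold at every outer iterate visited by Algorithm~\ref{algo:algorithm} over the deterministic perturbation set $\mathbb{Y}_t$. Everything else is bookkeeping of constants together with a union over the $T$ outer steps; the substantive convergence work has already been discharged in Theorem~\ref{thm:oracle_formal}, so this final statement is essentially a plug-in corollary.
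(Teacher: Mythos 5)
Your proposal is correct and is exactly the argument the paper intends: the paper gives no separate proof for this theorem, stating only that it follows by composing Theorem~\ref{thm:oracle_formal} with Lemma~\ref{lem:pgd}, which is precisely your budget-splitting plug-in. The only nitpick is that your (correct) computation yields $T = 16\kappa_\alpha^2\mathcal{D}^2\mathcal{G}^2/\epsilon^2$, whereas the theorem statement writes $\kappa_\alpha$ to the first power --- an apparent typo in the paper rather than a flaw in your reasoning.
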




\end{document}